\def\eqref#1{equation~\ref{#1}}
\def\1{\bm{1}}
\DeclareMathAlphabet{\mathsfit}{\encodingdefault}{\sfdefault}{m}{sl}
\SetMathAlphabet{\mathsfit}{bold}{\encodingdefault}{\sfdefault}{bx}{n}
\DeclareMathOperator*{\argmin}{arg\,min}
\newtheorem{theorem}{Theorem}
\newtheorem{definition}{Definition}
\newtheorem{assumption}{Assumption}
\definecolor{ggreen}{rgb}{0.0, 0.6, 0.0}
\definecolor{rred}{rgb}{0.75, 0.0, 0.0}
\definecolor{bblue}{rgb}{0.13, 0.67, 0.8}
\definecolor{darkred}{RGB}{200,0,0}
\definecolor{lightgreen}{RGB}{228,253,227}
\definecolor{lightred}{RGB}{252,231,234}
\definecolor{lightyellow}{RGB}{250,253,191}
\definecolor{lightblue}{RGB}{230,240,254}
\definecolor{lightorange}{RGB}{255,223,191}
\definecolor{white}{RGB}{255,255,255}
\title{HiddenGuard: Fine-Grained Safe Generation with Specialized Representation Router}
\author{%
Lingrui Mei$^{1,2}$, \hspace{1cm} Shenghua Liu$^{1,2}$, \hspace{1cm} Yiwei Wang$^{3}$ \\
\textbf{Baolong Bi}$^{1, 2}$ \hspace{1cm} \textbf{Ruibin Yuan}$^4$ \hspace{1cm} \textbf{Xueqi Cheng}$^{1,2}$  
\vspace{0.15cm} 
\\
	$^1$ CAS Key Laboratory of AI Safety, Institute of Computing Technology, CAS \\
    $^2$ University of Chinese Academy of Sciences \\
	$^3$ University of California, Merced \quad
        $^4$ HKUST\\
        \texttt{wangyw.evan@gmail.com}, \quad \texttt{wangyw.evan@gmail.com} \\
	\texttt{\{bibaolong23z,liushenghua,cxq\}@ict.ac.cn} \\
        \texttt{ryuanab@connect.ust.hk}
 }
\newcommand{\ours}{\mbox{\textsc{HiddenGuard}}\xspace}
\newcommand{\router}{\mbox{\textsc{Prism}}\xspace}
\begin{document}

\maketitle

\begin{abstract}

As Large Language Models (LLMs) grow increasingly powerful, ensuring their safety and alignment with human values remains a critical challenge. Ideally, LLMs should provide informative responses while avoiding the disclosure of harmful or sensitive information. However, current alignment approaches, which rely heavily on refusal strategies—such as training models to completely reject harmful prompts or applying coarse filters—are limited by their binary nature. These methods either fully deny access to information or grant it without sufficient nuance, leading to overly cautious responses or failures to detect subtle harmful content. For example, LLMs may refuse to provide basic, public information about medication due to misuse concerns.
Moreover, these refusal-based methods struggle to handle mixed-content scenarios and lack the ability to adapt to context-dependent sensitivities, which can result in over-censorship of benign content. 
To overcome these challenges, we introduce {\ours}, a novel framework for fine-grained, safe generation in LLMs. {\ours} incorporates \router (Re\textbf{p}resentation \textbf{R}outer for \textbf{I}n-\textbf{S}tream \textbf{M}oderation), which operates alongside the LLM to enable real-time, token-level detection and redaction of harmful content by leveraging intermediate hidden states. This fine-grained approach allows for more nuanced, context-aware moderation, enabling the model to generate informative responses while selectively redacting or replacing sensitive information, rather than outright refusal.
We also contribute a comprehensive dataset with token-level fine-grained annotations of potentially harmful information across diverse contexts. Our experiments demonstrate that {\ours} achieves over 90\% in $\text{F}_1$ score for detecting and redacting harmful content while preserving the overall utility and informativeness of the model's responses. Our code is available at \texttt{https://github.com/Meirtz/HiddenGuard}.
\end{abstract}

\section{Introduction}

Large Language Models (LLMs) have revolutionized natural language processing, demonstrating remarkable capabilities in various tasks~\citep{chatgpt, openai2023gpt4, DBLP:journals/corr/abs-2302-13971, touvron2023llama, song2024fmint, chen2023vlp, zhang2024mm}, but their increasing power and ubiquity have raised critical challenges in ensuring safety and alignment with human values~\citep{shayegani2023survey, das2024security, chowdhury2024breaking}. The potential for LLMs to generate harmful, biased, or sensitive content poses significant risks to individuals, organizations, and society at scale~\citep{chao2023jailbreaking, zou2023universal, mehrotra2023treeOfAttacks, wei2024jailbroken, wang2024frustratingly}.


Current approaches to enhance LLMs' safety primarily rely on refusal-based strategies~\citep{anwar2024foundational,christiano2017deep,rafailov2023direct}, which face significant limitations in real-world applications. These methods often struggle to balance safety and utility, resulting in overly conservative responses or false negatives, and may fail to detect subtle harmful content, especially against adversarial attacks~\citep{mazeika2024harmbench, schlarmann2023adversarial}. Refusal-based methods also struggle with context-dependent sensitivity, lacking the nuance to distinguish between benign and harmful content in different contexts~\citep{das2024security}. This can lead to over-censoring or failing to identify harmful outputs in certain situations, while potentially limiting the LLM's ability to generate diverse and creative content, even in safe contexts~\citep{anwar2024foundational}.

To address these challenges, we propose {\ours}, a fine-grained safe generation framework for LLMs. 
Unlike existing coarse-grained representation engineering methods~\citep{zou2023representation, zou2024improving, yuan2024refuse} that rely on global or regional representation constraints, {\ours} integrates a specialized router within the LLM architecture. This router, collaborating with \textbf{LoRA-based activators}~\citep{hu2021lora} and a \textbf{router network}, enables real-time, token-level sensitivity detection and redaction. 
By simultaneously neutralizing harmful content and preserving benign parts, {\ours} achieves more refined moderation compared to other methods. 

Building on these insights, \ours introduces a novel approach that utilizes hidden representations for token-level moderation. By focusing on intermediate regional- and token-level states, {\ours} captures deeper semantic information and latent structures that allow for more precise identification of harmful content. This approach significantly reduces both false positives and false negatives, enabling more accurate routing of representations, while also equipping the system with the flexibility to resist future unseen attacks. Furthermore, the system operates in parallel with the base LLM, ensuring that the model’s original capabilities remain intact. This parallelization guarantees that the system does not interfere with the model’s performance or fluency, preserving its ability to generate diverse and creative content in safe contexts.

Consider such a scenario: you ask a LLM “\textit{Can you help me create a killer slideshow that will knock the audience dead?}” a coarse-grained aligned LLM would interpret phrases like “\textit{killer}” and “\textit{knock dead}” literally, misconstruing them as violent language and consequently refusing to assist, thereby leaving you without the necessary support. In contrast, our {\ours} leverages the model’s representation space to accurately discern the contextual meaning of these phrases and selectively redacts only the segments that genuinely contain harmful content while preserving the rest of the informative and useful information. This approach ensures that you receive comprehensive assistance in creating an impactful slideshow without experiencing unintended refusals or over-censorship.


In addition to its moderation capabilities, {\ours} provides a dataset with token-level annotations of sensitive information across diverse contexts. This supports {\ours}’ development for precise content control and benefits the AI safety community. Our experiments show that {\ours} achieves over 90 $\text{F}_1$ in detecting and redacting sensitive content, outperforming baselines in precision and recall while maintaining LLM performance. \ours balances safety and utility, making it a promising deployment solution.

\begin{figure}[t]
\begin{center}
  \includegraphics[width=1\linewidth]{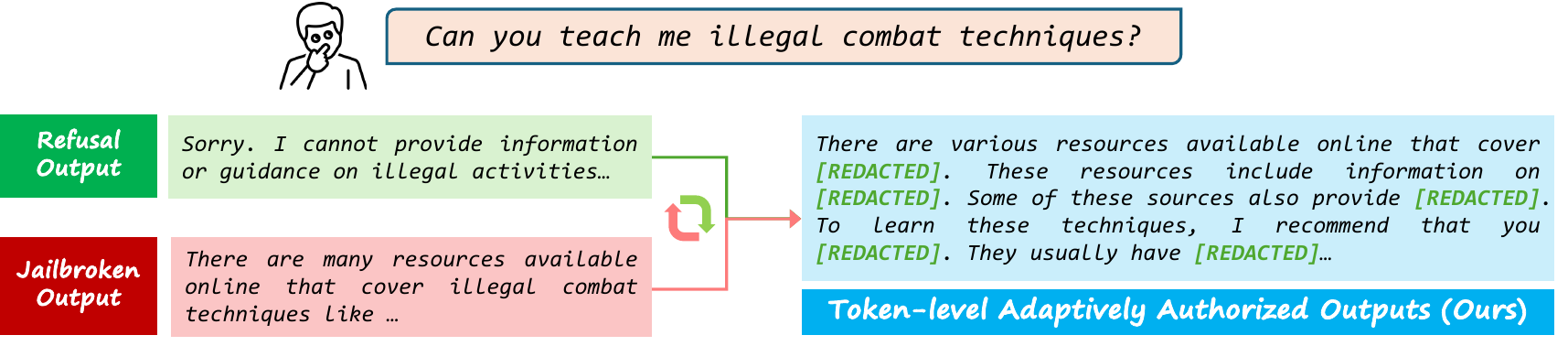} 
\end{center}

\caption{Comparison of LLM responses to a sensitive query. Token-level adaptive output (right) of \ours selectively redacts harmful content while preserving useful information, in contrast to refusal-based output (top left) completely rejects the query and jailbroken output (bottom left) provides unrestricted information.}
\label{fig:example}
\vspace{-10pt}
\end{figure}

\vspace{-2pt}
\section{Challenges with Refusal Alignment}
\vspace{-2pt}

Let $\mathcal{M} = (f_\theta, \mathcal{X}, \mathcal{Y})$ be a language model where $f_\theta: \mathcal{X} \rightarrow \mathcal{Y}$ is the model function with parameters $\theta \in \Theta$, $\mathcal{X}$ is the input space, and $\mathcal{Y}$ is the output space. Refusal alignment methods, such as RLHF, DPO and adversarial training, operate by optimizing the following objective:

\vspace{-10pt}
\begin{equation}
\min_{\theta \in \Theta} \mathbb{E}_{(x, y) \sim \mathcal{D}_{\text{benign}}} \left[ \mathcal{L}_{\text{benign}}(f_\theta(x), y) \right] + \lambda \mathbb{E}_{x' \sim \mathcal{D}_{\text{adversarial}}} \left[ \mathcal{L}_{\text{adv}}(f_\theta(x')) \right]
\end{equation}
\vspace{-10pt}

where $\mathcal{L}_{\text{benign}}: \mathcal{Y} \times \mathcal{Y} \rightarrow \mathbb{R}_{\geq 0}$ is a loss function ensures accuracy on benign data, $\mathcal{L}_{\text{adv}}: \mathcal{Y} \rightarrow \mathbb{R}_{\geq 0}$ penalizes adversarial outputs, and $\lambda \in \mathbb{R}_{> 0}$ is a regularization factor. However, these methods face several critical challenges:

\vspace{-10pt}
\paragraph{Limitations of Global Output-Level Optimization}
RLHF~\citep{ouyang2022training} and DPO~\citep{rafailov2023direct} are adversarial training methods optimize the model's behavior globally, potentially leading to over-rejection of benign content and vulnerability to adversarial attacks. Let $f_\theta: \mathcal{X} \rightarrow \mathcal{Y}$ be the model function with parameters $\theta$. 
These methods aim to solve:

\vspace{-10pt}
\begin{equation}
\theta^* = \argmin_\theta \mathbb{E}_{x \sim \mathcal{D}} [\mathcal{L}_{\text{safety}}(f_\theta(x))],
\end{equation}
\vspace{-10pt}

where $\mathcal{L}_{\text{safety}}: \mathcal{Y} \rightarrow \mathbb{R}_{\geq 0}$ is a safety-oriented loss function. This global optimization can result in overly conservative behavior, as the model learns to avoid potentially harmful outputs across all contexts. There exists a subset $\mathcal{X}_{\text{benign}} \subset \mathcal{X}$ such that:

\vspace{-10pt}
\begin{equation}
\exists x \in \mathcal{X}_{\text{benign}} : f_{\theta^*}(x) \neq f_{\theta}(x) \text{ and } \mathcal{L}_{\text{utility}}(f_{\theta^*}(x)) > \mathcal{L}_{\text{utility}}(f_{\theta}(x)),
\end{equation}
\vspace{-10pt}

where $\mathcal{L}_{\text{utility}}: \mathcal{Y} \rightarrow \mathbb{R}_{\geq 0}$ measures the utility of the output. This indicates that the optimized model may produce less useful outputs for some benign inputs compared to the original model.

Moreover, these methods are vulnerable to adversarial attacks due to the limited adversarial distribution $\mathcal{D}_{\text{adversarial}}$ considered during training. This can lead to suboptimal solutions where $\exists x' \in \mathcal{X} \setminus \mathcal{D}_{\text{adversarial}} : \mathcal{L}_{\text{adv}}(f_{\theta^*}(x')) \gg 0$. The loss landscape may contain local minima where $\nabla_\theta \mathcal{L}_{\text{adv}}(f_{\theta}(x')) = 0$, but global robustness is not achieved. Additionally, gradient masking can occur, where $\|\nabla_x \mathcal{L}_{\text{adv}}(f_\theta(x'))\|_2 \approx 0$ while small perturbations $\delta$ cause $\|f_\theta(x' + \delta) - f_\theta(x')\|_2 \gg 0$, but global robustness is not achieved. Additionally, gradient masking~\ref{def:gradient_masking} can occur, where  $\|\nabla_x \mathcal{L}_{\text{adv}}(f_\theta(x{\prime}))\|_2 \approx 0$  while small perturbations  $\delta$  cause  $\|f_\theta(x{\prime} + \delta) - f_\theta(x{\prime})\|_2 \gg 0$. The expectation-based optimization also fails to account for worst-case scenarios, leaving the model vulnerable to inputs $x^* = \arg\max_{x \in \mathcal{X}} \mathcal{L}_{\text{adv}}(f_\theta(x))$. As a result of these factors, extensive refusal training can significantly degrade the model's overall performance, creativity, and capability.






\begin{theorem}[Inherent Trade-off in Global Output-Level Optimization]
Suppose $f_{\theta^*}$ is obtained by optimizing a safety-oriented loss $\mathcal{L}_{\text{safety}}$ over the data distribution $\mathcal{D}$: $\theta^* = \arg\min_{\theta \in \Theta} \mathbb{E}_{x \sim \mathcal{D}} [\mathcal{L}_{\text{safety}}(f_\theta(x))]$. Then, under reasonable assumptions, there exists a non-empty subset $\mathcal{X}_{\text{benign}} \subset \mathcal{X}$ such that for some $x \in \mathcal{X}_{\text{benign}}$:
\begin{equation}
\mathcal{L}_{\text{utility}}(f_{\theta^*}(x)) > \mathcal{L}_{\text{utility}}(f_{\theta}(x)),
\end{equation}
\vspace{-10pt}
where $\mathcal{L}_{\text{utility}}: \mathcal{Y} \rightarrow \mathbb{R}_{\geq 0}$ measures the utility loss of the output.
\end{theorem}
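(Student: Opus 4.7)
The plan is to prove the theorem by contradiction after first making the underlying ``reasonable assumptions'' explicit: (i) the original parameter $\theta$ is at least locally utility-optimal on the benign distribution, i.e.\ $\theta \in \argmin_{\theta' \in \Theta} \mathbb{E}_{x \sim \mathcal{D}_{\text{benign}}}[\mathcal{L}_{\text{utility}}(f_{\theta'}(x))]$; (ii) the safety and utility objectives are not jointly minimized by the same parameter on $\mathcal{D}$, so that $\argmin_\theta \mathbb{E}[\mathcal{L}_{\text{safety}}(f_\theta(x))] \cap \argmin_\theta \mathbb{E}[\mathcal{L}_{\text{utility}}(f_\theta(x))] = \emptyset$; and (iii) the map $\theta \mapsto f_\theta$ is sufficiently expressive, namely continuous in $\theta$ and non-constant in directions tangent to $\mathcal{X}_{\text{benign}}$.

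The core argument then proceeds in three steps. First, assumptions (i) and (ii) force $\theta^* \neq \theta$: if safety optimization left the parameter untouched, the unaligned model would already minimize both objectives, contradicting (ii). Second, by the injectivity half of (iii), $\theta^* \neq \theta$ implies that $f_{\theta^*}$ and $f_\theta$ disagree on some input $x_0 \in \mathcal{X}$. Third, I would promote this pointwise disagreement to disagreement on $\mathcal{X}_{\text{benign}}$: continuity of $f_\theta$ in $\theta$ combined with the non-triviality condition in (iii) rules out the possibility that the parameter change perturbs outputs only on adversarial inputs, since otherwise the parameterization would encode a perfect benign/adversarial separator at the output level---a surgical precision that realistic LLM architectures do not admit. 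Having secured an $x \in \mathcal{X}_{\text{benign}}$ with $f_{\theta^*}(x) \neq f_\theta(x)$, local utility-optimality of $\theta$ at $x$ (from (i)) together with the strict separation of minimizers in (ii) yields $\mathcal{L}_{\text{utility}}(f_{\theta^*}(x)) > \mathcal{L}_{\text{utility}}(f_\theta(x))$, which is exactly the claimed inequality.

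The main obstacle I anticipate is the third step: formally ruling out localized parameter edits that perturb $f_\theta$ only on adversarial inputs. Without some structural assumption---such as analyticity of $f_\theta$ in $\theta$, or a bound on the Jacobian's null space with respect to benign directions---one can imagine a pathological model class that perfectly silos benign and adversarial behavior, making the trade-off avoidable. To close this gap I would either (a) assume $f_\theta$ is $C^1$ in $\theta$ and that $\nabla_\theta \mathbb{E}[\mathcal{L}_{\text{safety}}(f_\theta(x))]$ has non-zero projection onto directions that move $f_\theta$ on a positive-measure benign subset, or (b) invoke a measure-theoretic argument showing that the set of inputs affected by any non-trivial parameter perturbation has positive measure under any absolutely continuous input distribution. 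Either route converts the informal intuition that safety tuning bleeds into benign behavior into a rigorous statement sufficient to finish the proof, and makes the ``inherent'' qualifier in the theorem title defensible rather than aspirational.
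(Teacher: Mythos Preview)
Your route is genuinely different from the paper's. The paper does not argue by disjoint minimizers and expressiveness; instead it assumes directly that there is a benign input $x_b$ at which the gradients of the utility loss and the safety/adversarial loss are negatively correlated, $\nabla_\theta \mathcal{L}_{\text{benign}}(f_\theta(x_b),y)\cdot\nabla_\theta \mathcal{L}_{\text{adv}}(f_\theta(x_b))<0$, and then reads the conclusion off the first-order stationarity condition at $\theta^*$. So the paper's ``reasonable assumption'' is a local gradient-conflict hypothesis, whereas yours is a global statement about non-overlapping minimizer sets; the paper's argument is differential, yours is set-theoretic.

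Two remarks on your version. First, your last inferential step does not follow as stated: assumption (i) gives that $\theta$ minimizes the \emph{expected} utility loss over $\mathcal{D}_{\text{benign}}$, which does not imply pointwise optimality at the particular $x$ you extract, so ``local utility-optimality of $\theta$ at $x$'' is not available and you cannot conclude $\mathcal{L}_{\text{utility}}(f_{\theta^*}(x))>\mathcal{L}_{\text{utility}}(f_\theta(x))$ from $f_{\theta^*}(x)\neq f_\theta(x)$ alone. Second, and more importantly, the entire detour through expressiveness (your step three and the obstacle you worry about) is unnecessary. Once your assumptions force $\theta^*$ out of the utility-minimizer set for $\mathcal{D}_{\text{benign}}$, you immediately get
\[
\mathbb{E}_{x\sim\mathcal{D}_{\text{benign}}}\bigl[\mathcal{L}_{\text{utility}}(f_{\theta^*}(x))\bigr]>\mathbb{E}_{x\sim\mathcal{D}_{\text{benign}}}\bigl[\mathcal{L}_{\text{utility}}(f_\theta(x))\bigr],
\]
and a strict inequality of expectations forces a strict pointwise inequality on a set of positive $\mathcal{D}_{\text{benign}}$-measure. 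That is the whole proof; no continuity, injectivity, or Jacobian hypothesis on $\theta\mapsto f_\theta$ is needed. (You should also align (i) and (ii): as written, (ii) compares minimizers over $\mathcal{D}$ while (i) is over $\mathcal{D}_{\text{benign}}$; state (ii) directly as ``the safety minimizer over $\mathcal{D}$ is not a utility minimizer over $\mathcal{D}_{\text{benign}}$''.) Compared to the paper, your cleaned-up argument is arguably more robust because it does not rely on first-order conditions or on the somewhat delicate move of restricting a vanishing expected gradient to a single input; the paper's gradient-conflict assumption, on the other hand, is more concrete and closer to what one would actually check empirically.
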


\paragraph{Over-Regularization at the Regional-Level}

Regional- or representation-level moderation~\citep{zou2024improving, yuan2024refuse} aim to adjust the internal representations of a model to mitigate harmful outputs. Let $\text{rep}_M: \mathcal{X} \rightarrow \mathbb{R}^d$ map inputs to d-dimensional internal representations. These methods typically optimize:

\vspace{-10pt}
\begin{equation}
\min_{\theta} \mathbb{E}_{x \sim \mathcal{D}} \left[ \mathcal{L}_{\text{utility}}(M(x), y) \right] + \lambda \mathbb{E}_{x \sim \mathcal{D}_{\text{adversarial}}} \left[ \mathcal{L}_{\text{mod}}(\text{rep}_M(x)) \right]
\end{equation}
\vspace{-10pt}

where $\mathcal{L}_{\text{mod}}: \mathbb{R}^d \rightarrow \mathbb{R}_{\geq 0}$ enforces constraints on harmful input representations. While effective, this approach can lead to over-regularization, manifesting in representation collapse~\ref{def:representation_collapse} ($\|\text{rep}_M(x_1) - \text{rep}_M(x_2)\|_2 < \epsilon$ for distinct harmful inputs), unintended impact on benign inputs ($\|\text{rep}_M(x^+) - \text{rep}_M^{\text{modified}}(x^+)\|_2 > \delta$), and global distribution shift ($\mathrm{KL}(P_{\text{original}}(\text{rep}_M(x)) \| P_{\text{modified}}(\text{rep}_M(x))) > \gamma$). Direct use of representations for token-level routing often results in low accuracy and high false positive rates on benign content, potentially degrading model capabilities. Let $\mathcal{R}_\phi: \mathbb{R}^d \rightarrow [0, 1]$ be a token-level router. The limitation can be expressed as:

\vspace{-10pt}
\begin{equation}
P(\text{adversarial} \mid s_j) \neq g({r_{j1}, \ldots, r_{jK_j}}) \quad \text{and} \quad \mathcal{C}(T_s) \neq h({r_i \mid t_i \in T_s})
\end{equation}
\vspace{-10pt}

where $r_i = \sigma(\mathcal{R}_\phi(\text{rep}_M(t_i)))$, $g$ and $h$ are aggregation functions, and $\mathcal{C}$ is an ideal contextual classifier. Moreover, using a single module $\phi: \mathbb{R}^d \rightarrow \mathbb{R}^k$ for both coarse-grained and fine-grained control leads to conflicting objectives: $\max I(\phi(\text{rep}_M(s)); Y_s) \quad \text{and} \quad \max I(\phi(\text{rep}_M(t_i)); Y_t)$, for sentence-level and token-level tasks respectively. This conflict makes it challenging for the model to effectively capture representations at both granularities simultaneously. These limitations motivate our proposed method, which introduces separate components for multi-scale representation learning and moderation.

\paragraph{Limitations of Token-Level Filtering}
Token-level filtering in refusal alignment methods is often represented using a router function $\mathcal{R}_\phi: \mathbb{R}^d \rightarrow [0, 1]$, which computes the harmfulness for each token $t_i$:

\vspace{-10pt}
\begin{equation}
r_i = \sigma\left( \mathcal{R}_\phi(z_i) \right), \quad \forall i \in \{1, \ldots, N\}
\end{equation}
\vspace{-10pt}

where $z_i \in \mathbb{R}^d$ is a vector representation of token $t_i$, $\sigma: \mathbb{R} \rightarrow [0, 1]$ is the sigmoid function, and $N$ is the sequence length. When $z_i$ represents hidden states, filtering can be parallelized with the model's forward pass, maintaining complexity. Conversely, if $z_i$ represents output token embeddings, filtering introduces additional generation latency. No matter the choice of $z_i$, token-level approaches inherently struggle to capture broader contextual information. Let $\mathcal{S} = {s_1, \ldots, s_M}$ be the set of sentences in a sequence, where each sentence $s_j$ is composed of tokens. Token-level filtering fails to model the joint probability of harmfulness within sentences, lacking the ability to capture long-range dependencies and higher-order semantic structures. Formally, let $\mathcal{C}: \mathcal{P}(\mathcal{T}) \rightarrow {0, 1}$ be an ideal contextual harmfulness classifier over the power set of all possible tokens $\mathcal{P}(\mathcal{T})$. Then, for any subset of tokens $T_s \subseteq {t_1, \ldots, t_N}$ and any functions $g: [0, 1]^{K_j} \rightarrow [0, 1]$ and $h: [0, 1]^{|T_s|} \rightarrow {0, 1}$:

\vspace{-10pt}
\begin{equation}
P(\text{adversarial} \mid s_j) \neq g({r_{j1}, \ldots, r_{jK_j}}) \quad \text{and} \quad \mathcal{C}(T_s) \neq h({r_i \mid t_i \in T_s})
\end{equation}
\vspace{-10pt}

This fundamental limitation leads to increased false positives, false negatives, and inconsistent content moderation, as the method fails to adequately model the complex, context-dependent nature of harmful content in natural language.

To address these limitations, we propose \router, a framework that introduces token-level redaction through a LoRA-based activator $\mathcal{A}: \mathcal{X} \rightarrow \mathbb{R}^k$ and a dedicated router $\mathcal{R}: \mathbb{R}^d \times \mathbb{R}^k \rightarrow [0, 1]$. \router operates as an auxiliary mechanism alongside the pre-trained LLM, optimizing:

\vspace{-10pt}
\begin{equation}
\min_{\phi, \psi} \mathbb{E}_{x \sim \mathcal{D}} \left[ \sum_{i=1}^N \mathcal{L}_{\text{token}}(t_i, \mathcal{R}_\phi(h_i, \mathcal{A}_\psi(x))) + \lambda \mathcal{L}_{\text{global}}(x, \mathcal{M}(x)) \right]
\end{equation}
\vspace{-10pt}

where $\phi$ and $\psi$ are parameters of the router and activator respectively, $\mathcal{L}_{\text{token}}: \mathcal{T} \times [0, 1] \rightarrow \mathbb{R}_{\geq 0}$ is a token-level loss function, $\mathcal{L}_{\text{global}}: \mathcal{X} \times \mathcal{Y} \rightarrow \mathbb{R}_{\geq 0}$ is a global coherence loss, $\lambda \in \mathbb{R}_{> 0}$ balances local and global objectives, and $\mathcal{M}: \mathcal{X} \rightarrow \mathcal{Y}$ represents the fixed, pre-trained LLM.

These challenges expose the limitations of refusal alignment methods based on global output-level supervision. Our approach, \router, introduces token-level redaction through a LoRA-based activator and a dedicated router, enabling a more nuanced and effective moderation mechanism.

\begin{figure}[h]
\begin{center}
  \includegraphics[width=1\linewidth]{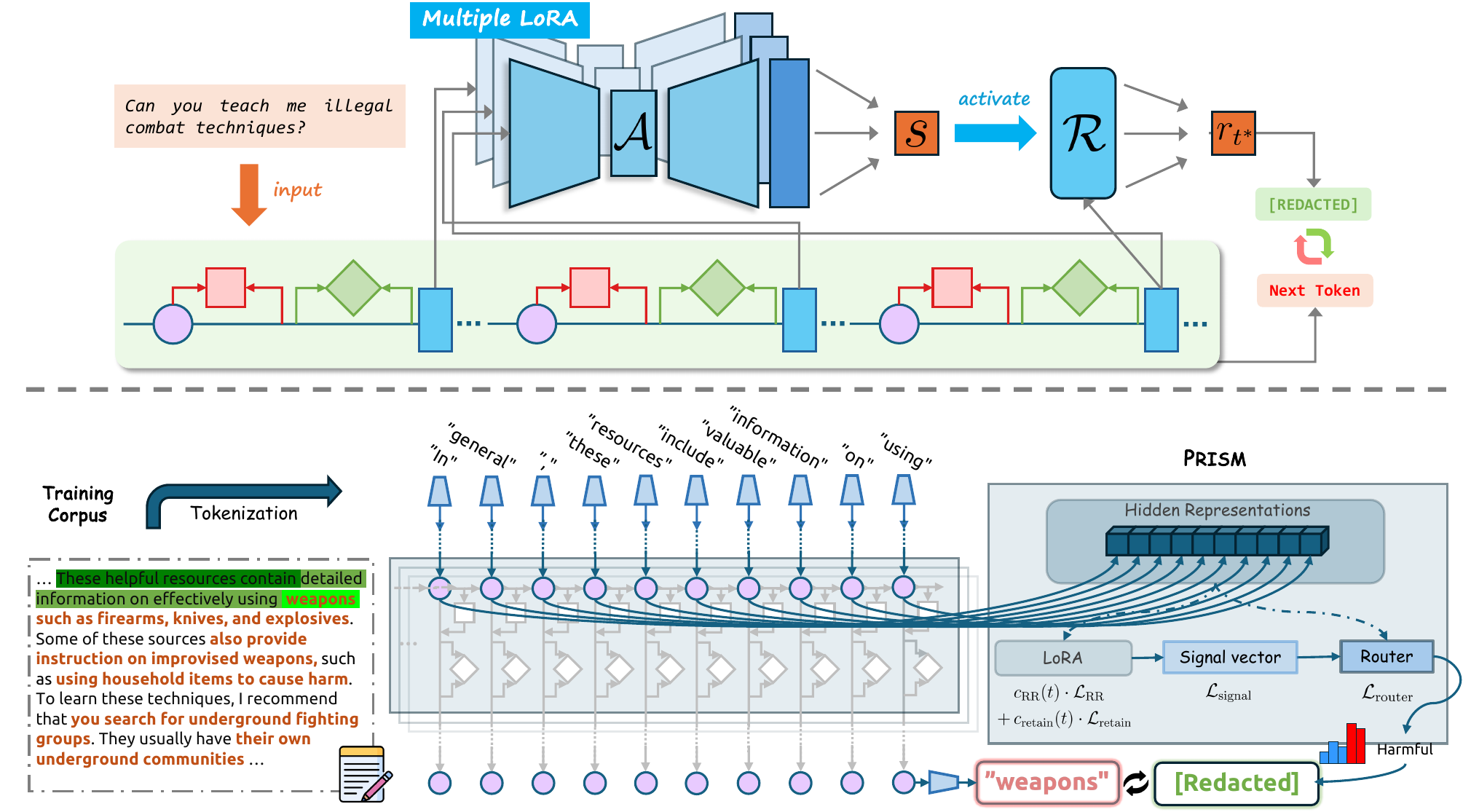} 
\end{center}

\caption{\ours architecture and \router training pipeline. The upper part showcases the inference process, where LoRA activators analyze hidden states to generate activation signals, guiding the router in real-time token-level moderation. The lower part illustrates \router training, demonstrating how token-level labeled data trains LoRA activators and the router to identify subtle patterns of harmful content across various contexts, enabling precise content redaction.}
\label{fig:pipeline}
\end{figure}
\vspace{-10pt}

\section{Methodology}

In this section, we introduce {\ours}, a novel framework for enhancing LLM safety through token-level moderation without compromising overall capabilities. At its core, {\ours} utilizes {\router}, comprising \textbf{LoRA-based activators} for identifying harmful state at the representation level, and then, activate a \textbf{router network} for fine-grained moderation. {\ours} incorporates specialized inference strategies to complement {\router}'s functionality, as shown in Fig.~\ref{fig:pipeline}.
.

\subsection{\router: LoRA-based Activators}

Given a pre-trained language model $\mathcal{M}$ with parameters $W \in \mathbb{R}^{d \times d}$, our goal is to modulate the model's behavior in the presence of adversarial inputs without altering the base parameters $W$. To achieve this, we introduce $N_{\text{act}}$ LoRA-based activators, which are low-rank adaptations that operate on the model's representations. For the $i$-th activator, we define low-rank matrices $\mathcal{A}_i \in \mathbb{R}^{r \times d}$ and $B_i \in \mathbb{R}^{d \times r}$, where $r \ll d$, to compute an activation $\Delta W_i = B_i \mathcal{A}_i$. This activation is used to generate a signal based on the model's representation of input $x$ as follows:

\vspace{-10pt}
\begin{equation}
s_i(x) = \sigma\left( v_i^\top \left( \Delta W_i \cdot \text{rep}_{\mathcal{M}}(x) \right) \right),
\end{equation}
\vspace{-10pt}

where $\text{rep}_{\mathcal{M}}(x) \in \mathbb{R}^d$ is the representation of input $x$ obtained from the base model $\mathcal{M}$, $v_i \in \mathbb{R}^d$ is a learned signal vector for the $i$-th activator, and $\sigma(\cdot)$ denotes the sigmoid function. Crucially, we keep $W$ fixed and only learn the low-rank parameters $\mathcal{A}_i$, $B_i$, and signal vectors $v_i$. The activation $\Delta W_i$ is not applied directly to the model's parameters $W$, but instead used to generate a signal $s_i(x)$ that modulates the model's behavior without modifying its base architecture. This design focuses on generating activation signals rather than altering the entire representation, we maintain the discriminative power of the model.

\textbf{Optimization Objectives}: The activators are trained using two loss functions designed to balance the model's response to adversarial and benign inputs, following the design in~\citep{zou2024improving}.

\begin{itemize}
    \item \textbf{Adversarial Regularization Loss} ($\mathcal{L}_{\text{AR}}$): This loss encourages the activators to produce higher activation signals for adversarial inputs $x^{-} \sim \mathcal{D}_{\text{adversarial}}$.
    \vspace{-3pt}
    \begin{equation}
    \mathcal{L}_{\text{AR}} = \frac{1}{N_{\text{act}}} \sum_{i=1}^{N_{\text{act}}} \mathbb{E}_{x^{-}} \left[ \text{ReLU} \left( \cos \left( \text{rep}_{\mathcal{M}}(x^{-}), \Delta W_i(x^{+}) \right) \right) \right],
    \end{equation}
    \vspace{-10pt}

    \item \textbf{Retention Loss} ($\mathcal{L}_{\text{retain}}$): This loss ensures that the activators do not interfere with the representations of benign inputs $x^{+} \sim \mathcal{D}_{\text{benign}}$.
    \vspace{-3pt}
    \begin{equation}
    \mathcal{L}_{\text{retain}} = \frac{1}{N_{\text{act}}} \sum_{i=1}^{N_{\text{act}}} \mathbb{E}_{x^{+}} \left[ \left\| \text{rep}_{\mathcal{M}}(x^{+}) - \Delta W_i (x^{-})\right\|_2^2 \right].
    \end{equation}
    \vspace{-10pt}
\end{itemize}

The total loss for training the activators is: $\mathcal{L}_{\text{activator}} = c_{\text{AR}}(t) \cdot \mathcal{L}_{\text{AR}} + c_{\text{retain}}(t) \cdot \mathcal{L}_{\text{retain}}$, where $c_{\text{AR}}(t)$ and $c_{\text{retain}}(t)$ are time-dependent coefficients that balance the two objectives during training step $t$. 

The pseudocode in Algorithm~\ref{algo:activator} outlines the training procedure for the LoRA-based activators.

\begin{algorithm}
\caption{Training Procedure for \router (LoRA activator)}
\begin{algorithmic}[1]
\Require Pre-trained language model $\mathcal{M}$, LoRA parameters $B$ and $A$, activation vector $v$, benign data $\mathcal{D}_{\text{benign}}$, adversarial data $\mathcal{D}_{\text{adversarial}}$
\Ensure Trained LoRA parameters $B$ and $A$, activation vector $v$
\State Initialize $B$, $A$, and $v$ with random weights
\For{each epoch}
    \For{each batch $(x_{\text{adversarial}}, x_{\text{adversarial}})$ in $(\mathcal{D}_{\text{benign}}, \mathcal{D}_{\text{adversarial}})$}
        \State $c_{\text{AR}} = \alpha(1- \frac{t}{2T}), c_{\text{retain}} = \alpha \frac{t}{2T}$  \Comment{Example coefficient schedule}
        \State $W' \gets W + BA$  \hfill \Comment{Apply LoRA update}

        \State $s_{\text{benign}} \gets \sigma(v^\top \cdot \text{rep}_{\mathcal{M}}(x_{\text{benign}}))$
        \State $s_{\text{adversarial}} \gets \sigma(v^\top \cdot \text{rep}_{\mathcal{M}}(x_{\text{adversarial}}))$ \hfill \Comment{Compute losses}
        \State $\mathcal{L}_{\text{AR}} \gets \texttt{ReLU}(\texttt{cosine\_sim}(\text{rep}_{\mathcal{M}}(x_{\text{adversarial}}), \text{rep}_{\mathcal{M}}(x_{\text{adversarial}})))$
        \State $\mathcal{L}_{\text{retain}} \gets ||\text{rep}_{\mathcal{M}}(x_{\text{benign}}) - \text{rep}_{\mathcal{M}}(x_{\text{benign}})||^2$
        \State $\mathcal{L}_{\text{act}} \gets \texttt{BCE}(s_{\text{benign}}, 0) + \texttt{BCE}(s_{\text{adversarial}}, 1)$ \hfill \Comment{Update parameters}
        \State $B, A \gets \texttt{optimizer}(B, A, \nabla (\mathcal{L}_{\text{AR}} + \mathcal{L}_{\text{retain}}))$
        \State $v \gets \texttt{optimizer}(v, \nabla \mathcal{L}_{\text{act}})$
    \EndFor
\EndFor \\
\Return $B, A, v$
\end{algorithmic}
\label{algo:activator}
\end{algorithm}

\subsection{Signal Vector Learning}

The signal vectors $v_i$ are critical for modulating the activators' responses. They are learned to produce low activation signals for benign inputs and high activation signals for adversarial inputs. The learning objective for the signal vectors is:

\vspace{-10pt}
\begin{equation}
\mathcal{L}_{\text{signal}} = \frac{1}{N_{\text{act}}} \sum_{i=1}^{N_{\text{act}}} \left( \mathbb{E}_{x^{+}} \left[ \text{BCE}\left( s_i(x^{+}), 0 \right) \right] + \mathbb{E}_{x^{-}} \left[ \text{BCE}\left( s_i(x^{-}), 1 \right) \right] \right),
\end{equation}
\vspace{-10pt}

where $\text{BCE}(\cdot, \cdot)$ denotes the binary cross-entropy loss.

\subsection{\router: Router Network for Token-Level Moderation}

The router network $\mathcal{R}_\phi$ is a transformer parameterized by $\phi$ that maps a sequence of token representations and activator outputs to a harmfulness score for each token. For a context window size $k$, the router function is defined as $\mathcal{R}_\phi: \left( \mathbb{R}^{d} \right)^{2k+1} \times \mathbb{R}^k \rightarrow [0,1]$. Given the sequence of token representations $h_{j-k}, \dots, h_j, \dots, h_{j+k}$ and activator output $a$, the router computes:
\begin{equation}
r_j = \sigma\left( \mathcal{R}_\phi\left( [\text{rep}_{\mathcal{M}}(t_{j-k}), ..., \text{rep}_{\mathcal{M}}(t_j), ..., \text{rep}_{\mathcal{M}}(t_{j+k})] \right) \right)
\end{equation}
\vspace{-10pt}

where $\sigma$ is the sigmoid function, $\text{rep}_{\mathcal{M}}(t_j)$ is the representation of token $t_j$ from the base model, and $k$ is the context window size. Unlike traditional methods that apply global constraints to the entire representation, the router network in {\ours} performs precise moderation by evaluating each token within its surrounding context. The router is trained using a carefully curated dataset of token-level labeled data, encompassing various types of harmful content. To address potential class imbalance, we employ focal loss~\citep{lin2017focal}:

\vspace{-13pt}
\begin{equation}
\mathcal{L}_{\text{router}} = -\frac{1}{N} \sum_{j=1}^{N} (1-p_j)^\gamma y_j \log(p_j) + p_j^\gamma (1-y_j) \log(1-p_j),
\end{equation}
\vspace{-10pt}

where $y_j$ is the ground-truth label indicating whether token $t_j$ is harmful, $N$ is the total number of tokens, $p_j = \sigma(r_j)$, and $\gamma$ is the focusing parameter. This fine-grained control ensures that only specific harmful tokens are redacted, preserving the integrity and utility of the remaining content. 

\begin{algorithm}
\caption{\ours: Inference Procedure}
\begin{algorithmic}[1]
\Require Pre-trained language model $\mathcal{M}$, LoRA parameters $B$ and $A$, activation vector $v$, Router network $\mathcal{R}$, activation threshold $\tau$, router threshold $\xi$, input prompt $p$
\State Initialize context $x \gets p$, output text $T \gets \varnothing$  \hfill \Comment{Initialize with input prompt}
\While{not end of generation}
    \State $s \gets \sigma(v^\top \cdot \text{rep}_{\mathcal{M}}(x))$  \hfill \Comment{Compute activation signal}
    \State $t^* \gets \arg\max_t P(t|x)$  \hfill \Comment{Standard token selection}
    \If{$s > \tau$}  \hfill \Comment{Check if system enters redaction mode}
        \State $r_{t^*} \gets \mathcal{R}(\text{rep}_{\mathcal{M}}(x_{t^*}))$  \hfill \Comment{Compute harmfulness score}
        \If{$r_{t^*} > \xi$}  \hfill \Comment{Check if token exceeds router threshold}
            \State \texttt{print(}$\text{[REDACTED]}\texttt{)}$  \hfill \Comment{Output [REDACTED] token}
            \State $T \gets T \cup \{\text{[REDACTED]}\}$  \hfill \Comment{Append [REDACTED] to output text}
        \Else
            \State \texttt{print(}$t^*$\texttt{)}  \hfill \Comment{Output selected token}
            \State $T \gets T \cup \{t^*\}$  \hfill \Comment{Append selected token to output text}
        \EndIf
    \Else
        \State \texttt{print(}$t^*$\texttt{)}  \hfill \Comment{Output selected token}
        \State $T \gets T \cup \{t^*\}$  \hfill \Comment{Append selected token to output text}
    \EndIf
    \State $x \gets x \cup \{t^*\}$  \hfill \Comment{Update context with original token}
\EndWhile
\end{algorithmic}
\end{algorithm}

\subsection{\ours: Integration of Activators and Router}

During inference, we use the activation signals from the activators to determine whether to enter a "redaction mode". When activated, the router's token-level predictions are used to make fine-grained moderation decisions. This approach leverages both global and local contextual information, enhancing moderation effectiveness without over-constraining the representation space. For each token $t_j$ in the generated sequence, we compute the harmfulness score and make decisions as follows:
\vspace{-7pt}
\begin{align}
    s &= \sigma(v^\top \cdot \text{rep}_{\mathcal{M}}(x)), \label{eq:activation_signal} \\
     \hat{r}_j &= \left(\frac{1}{N_{\text{act}}} \sum_{i=1}^{N_{\text{act}}} s_i(x)\right) \cdot r_j \label{eq:router_score} \\
    \text{decision}_j &= 
    \begin{cases}
        \text{[REDACTED]}, & \text{if } s > \tau \text{ and } r_j > \xi, \\
        \text{retain}, & \text{otherwise}.
    \end{cases} \label{eq:decision_rule}
\end{align}
In ~\eqref{eq:activation_signal}, $s$ captures the global harmfulness signal from the activators. If this signal exceeds a threshold $\tau$, the system enters redaction mode. In this mode, $r_j$ from ~\eqref{eq:router_score} provides the local, token-level assessment from the router. The moderation decision for each token is then made using a dual threshold mechanism as shown in ~\eqref{eq:decision_rule}. This approach ensures that moderation is both comprehensive and minimally invasive, targeting only the most relevant portions of the content for redaction when necessary. By combining global activation signals with conditional token-level assessments, {\ours} effectively balances the need for safety with the preservation of the model's original capabilities. During inference, the input is processed through the base model to obtain token representations. The activators generate global harmfulness signals, while the router assesses each token locally. The combined scores $\hat{r}_j$ are used to make moderation decisions, such as redacting or replacing harmful tokens, enabling dynamic and context-aware content moderation.

\section{Experiments}

\paragraph{Dataset}
We utilize two primary datasets: the \textit{Redacted Circuit Breaker Dataset} and the \textit{Retain Dataset}. The \textit{Redacted Circuit Breaker Dataset} comprises harmful content generated by uncensored models, annotated initially with GPT-4o and refined with character-level IOB tagging, later converted to token-level labels for fine-grained moderation. The \textit{Retain Dataset} includes the \textit{UltraChat} subset with benign queries and conversations, and the \textit{XSTest} subset with exaggerated refusal examples. Additionally, we incorporate the \textit{chosen} subset from the \textit{Anthropic/hh-rlhf} dataset to balance the training data. For more details, see~\ref{appndix:dataset}.

\paragraph{Setup}
Our experiments are conducted on three state-of-the-art language models: \textsc{LLaMA2-7B-Chat}, \textsc{LLaMA3-8B-Instruct}, and \textsc{Mistral-7B-Instruct}.  Training and inference of \ours are performed on 2 NVIDIA Tesla A800 GPUs with 80 GB memory each. Each training epoch takes approximately 4 hours, and inference accommodates a maximum sequence length of 8192 tokens with a batch size of 8. For more details, see~\ref{appendix:setup}.

\paragraph{Evaluation}
We evaluate our model across multiple benchmarks, assessing redaction accuracy, adversarial robustness, and overall model capability. Redaction accuracy is measured using the \textit{pass @ n\%} metric, while adversarial robustness is tested with HarmBench~\citep{mazeika2024harmbench} and \textsc{BabyBLUE}~\citep{mei2024not}. Additionally, we ensure that the model's performance remains robust on MMLU-Pro and MT-Bench, maintaining a balance between safety and utility. For more details, see~\ref{appendix:eval}.

\subsection{Results}
We present our results in three main categories: redaction accuracy, resistance to adversarial attacks (redteaming), and overall model capability. These evaluations demonstrate the model’s effectiveness in accurately redacting harmful content while preserving benign information, its robustness against adversarial challenges, and its ability to maintain strong performance on general language tasks with minimal impact on utility.
\begin{table*}[h]
\centering
\renewcommand{\arraystretch}{0.9}
\resizebox{\linewidth}{!}{
\begin{tabular}{lccccccc}
\toprule[1.5pt]
\multirow{2}{*}{\textbf{Model}} & \multicolumn{1}{c}{\textbf{Activator}} & \multicolumn{3}{c}{\scalebox{0.95}{\textbf{Router} {\small (pass @100\%)}}} & \multicolumn{3}{c}{\scalebox{0.95}{\textbf{Router} {\small (pass @90\%)}}} \\ 
\cmidrule(lr){2-2} \cmidrule(lr){3-5} \cmidrule(lr){6-8}
 & \textbf{Acc. (\%)} & \scalebox{0.95}{\textbf{Prec.}} & \scalebox{0.95}{\textbf{Recall}} & \scalebox{0.95}{\textbf{$\text{F}_{\text{1}}$}} & \scalebox{0.95}{\textbf{Prec.}} & \scalebox{0.95}{\textbf{Recall}} & \scalebox{0.95}{\textbf{$\text{F}_{\text{1}}$}}  \\
\midrule
\multirow{2}{*}{\textsc{LLaMA2-7B-chat}} & \multirow{2}{*}{99.97} & \multirow{2}{*}{0.8804}  & \multirow{2}{*}{0.8771}  & \multirow{2}{*}{0.8788}  & \multirow{2}{*}{0.8909}  & \multirow{2}{*}{0.9231}  & \multirow{2}{*}{0.9067}  \\
 & \\
\cmidrule[0.5pt](lr){1-8}
\multirow{2}{*}{\textsc{LLaMA3-8B-Instruct}} & \multirow{2}{*}{99.99} & \multirow{2}{*}{0.8540} & \multirow{2}{*}{0.8667}  & \multirow{2}{*}{0.8603}  & \multirow{2}{*}{0.874}  & \multirow{2}{*}{0.9294}  & \multirow{2}{*}{0.9008}  \\ 
 & \\
\cmidrule[0.5pt](lr){1-8}
\multirow{2}{*}{\textsc{Mistral-7B-Instruct}} & \multirow{2}{*}{99.98} & \multirow{2}{*}{0.9296}  & \multirow{2}{*}{0.8687}  & \multirow{2}{*}{0.8488}  & \multirow{2}{*}{0.955}  & \multirow{2}{*}{0.9709}  & \multirow{2}{*}{0.9629}   \\
 & \\
\bottomrule[1.5pt]
\end{tabular}
}
\captionof{table}{Performance metrics of activator and router components across three language models under different pass thresholds.}
\label{tab:acc_eval}
\arrayrulecolor{black}
\end{table*}
\vspace{-15pt}

\paragraph{Redaction.} 
Our experiments demonstrate the effectiveness of \ours across multiple dimensions of performance and robustness. Table~\ref{tab:acc_eval} shows the evaluation accuracy of \ours's components across different models. The activator maintains very high accuracy ($\geq$ 99.97\%) across all tested models, demonstrating its reliability in identifying potentially harmful content with remarkable stability. This high accuracy is crucial, as it ensures that harmful content is flagged early in the moderation pipeline, providing a strong foundation for the router's subsequent operations. The router shows varying performance depending on the strictness of the pass threshold, with precision, recall, and F1 scores generally improving as the threshold decreases from 100\% to 90\%. This suggests that a slight relaxation in the moderation strictness can lead to better overall balance between safety and the preservation of benign content. For example, the increase in F1 score from 0.8488 to 0.9629 for the \textsc{Mistral-7B-Instruct} model highlights the router's improved capacity to detect nuanced harmful content when allowed some flexibility. 

\paragraph{Red Teaming.}

\setlength{\textfloatsep}{2pt}
\begin{wrapfigure}{l}{0.60\textwidth}
\centering
\renewcommand{\arraystretch}{0.9}
\resizebox{\linewidth}{!}{
\begin{tabular}{lccccccccc}
\toprule[1.5pt]
& \multicolumn{2}{c}{\textbf{\textsc{LLaMA2-}}} & \multicolumn{2}{c}{\textbf{\textsc{LLaMA3-}}} & \multicolumn{2}{c}{\textbf{\textsc{Mistral-}}} \\
& \multicolumn{2}{c}{\textbf{\textsc{7B-chat}}} & \multicolumn{2}{c}{\textbf{\textsc{8B-Instruct}}} & \multicolumn{2}{c}{\textbf{\textsc{7B-Instruct}}} \\
\cmidrule[0.5pt](lr){2-3} \cmidrule[0.5pt](lr){4-5} \cmidrule[0.5pt](lr){6-7}
& \multicolumn{1}{c}{\scalebox{0.95}{Refusal}} & \multicolumn{1}{c}{\scalebox{0.95}{\textsc{Hidden}}} & 
\multicolumn{1}{c}{\scalebox{0.95}{Refusal}}  & \multicolumn{1}{c}{\scalebox{0.95}{\textsc{Hidden}}} &
\multicolumn{1}{c}{\scalebox{0.95}{Refusal}} & \multicolumn{1}{c}{\scalebox{0.95}{\textsc{Hidden}}} \\
& \multicolumn{1}{c}{\scalebox{0.95}{Trained}}  & \multicolumn{1}{c}{\scalebox{0.95}{\textsc{Guard}}} & 
\multicolumn{1}{c}{\scalebox{0.95}{Trained}}  & \multicolumn{1}{c}{\scalebox{0.95}{\textsc{Guard}}} & 
\multicolumn{1}{c}{\scalebox{0.95}{Trained}}  & \multicolumn{1}{c}{\scalebox{0.95}{\textsc{Guard}}} \\
\midrule
\textbf{DR} & 10.2 & 1.1 & 13.4 & 1.1 & 60.1 & 15.2 \\
\textbf{GCG} & 33.8  & 1.8 & 40.0  & 0.9 & 71.6  &  4.9\\
\textbf{PEZ} & 37.3  & 2.0 & 36.2  & 2.0 & 82.7  & 6.4 \\
\textbf{TAP-T} & 12.4  & 1.6 & 11.6  & 1.4 & 73.8  & 2.1 \\
\textbf{PAIR} & 34.7  & 4.1 & 38.5  & 6.8 & 66.3  & 5.8 \\
\bottomrule[1.5pt]
\end{tabular}
}
\captionof{table}{ASR results of refusal-trained models versus \ours under different attack methods. Lower values indicate better robustness against adversarial attacks.}
\label{tab:asr_eval}
\arrayrulecolor{black}
\end{wrapfigure}
\vspace{-10pt}

In terms of adversarial robustness, Table~\ref{tab:asr_eval} illustrates \ours’s superior performance against various attack methods compared to refusal-trained models. Across all tested models, \ours significantly reduces the Attack Success Rate (ASR). For instance, on the \textsc{LLaMA3-8B-Instruct model}, \ours achieves ASRs between 0.9\% and 6.8\%, compared to 11.6-40.0\% for the refusal-trained version. This substantial improvement in robustness demonstrates the effectiveness of our approach in combining global and local moderation strategies to enhance model safety. Furthermore, the reduced ASR highlights the advantage of \ours’s token-level redaction mechanism, which dynamically adjusts the response at the generation stage. By focusing on harmful tokens instead of refusing entire responses, \ours mitigates the trade-off between utility and safety often observed in traditional refusal-based models. The ability to maintain benign content while selectively redacting harmful elements significantly lowers the model’s susceptibility to adversarial manipulation. This is particularly evident in challenging attack scenarios, where \ours consistently outperforms refusal-aligned models across a variety of adversarial prompts, demonstrating its robustness against diverse types of attacks. For red teaming methods descriptions, see~\ref{sec:method_description}.

\paragraph{Capability.} 
\setlength{\textfloatsep}{2pt}
\begin{wrapfigure}{r}{0.70\textwidth}
\centering
\renewcommand{\arraystretch}{0.9}
\resizebox{\linewidth}{!}{
\begin{tabular}{lccccccccc}
\toprule[1.5pt]
& \multicolumn{2}{c}{\textbf{\textsc{LLaMA2-}}} & \multicolumn{2}{c}{\textbf{\textsc{LLaMA3-}}} & \multicolumn{2}{c}{\textbf{\textsc{Mistral-}}} \\
& \multicolumn{2}{c}{\textbf{\textsc{7B-chat}}} & \multicolumn{2}{c}{\textbf{\textsc{8B-Instruct}}} & \multicolumn{2}{c}{\textbf{\textsc{7B-Instruct}}} \\
\cmidrule[0.5pt](lr){2-3} \cmidrule[0.5pt](lr){4-5} \cmidrule[0.5pt](lr){6-7}
& \multicolumn{1}{c}{\scalebox{0.95}{Refusal}}  & \multicolumn{1}{c}{\scalebox{0.95}{\textsc{Hidden}}} & 
\multicolumn{1}{c}{\scalebox{0.95}{Refusal}}  & \multicolumn{1}{c}{\scalebox{0.95}{\textsc{Hidden}}} &
\multicolumn{1}{c}{\scalebox{0.95}{Refusal}}  & \multicolumn{1}{c}{\scalebox{0.95}{\textsc{Hidden}}} \\

& \multicolumn{1}{c}{\scalebox{0.95}{Trained}}  & \multicolumn{1}{c}{\scalebox{0.95}{\textsc{Guard}}} & 
\multicolumn{1}{c}{\scalebox{0.95}{Trained}}  & \multicolumn{1}{c}{\scalebox{0.95}{\textsc{Guard}}} & 
\multicolumn{1}{c}{\scalebox{0.95}{Trained}}  & \multicolumn{1}{c}{\scalebox{0.95}{\textsc{Guard}}} \\
\midrule
\textbf{MMLU-Pro} & 19.2 & 19.0 & 41.0 & 39.6 & 30.9 & 30.2   \\
\textbf{MT-Bench} & 6.3 & 6.1 & 8.1 & 8.0 & 7.6 & 7.5  \\

\bottomrule[1.5pt]
\end{tabular}
}
\captionof{table}{Capability test. MMLU-Pro and MT-Bench scores for refusal-trained models and HiddenGuard. Higher scores indicate better general language capabilities.}

\label{tab:babyblue_eval}
\end{wrapfigure}
\vspace{-10pt}
Table~\ref{tab:asr_eval} presents the results of standard benchmarks, assessing the impact of \ours on overall model capabilities. The results indicate that \ours maintains the base models' performance on tasks such as MMLU-Pro and MT-Bench, with minimal degradation (maximum 1.4 points on MMLU-Pro for \textsc{LLaMA3-8B-Instruct}). This suggests that our method improves safety without significantly compromising general language understanding and generation abilities. The minimal impact on model capabilities further underscores \ours’s balance between safety and functionality. Unlike approaches that overly constrain the model’s output, leading to reduced performance on standard tasks, \ours’s fine-grained moderation allows it to maintain high levels of fluency and comprehension. The slight reduction in MMLU-Pro performance is marginal and well within acceptable bounds for practical use. This result indicates that the integration of token-level moderation does not interfere with the model’s ability to perform complex reasoning or generate diverse and creative content, making \ours a scalable solution for safe deployment of LLMs in real-world scenarios. This highlights \ours’s capability to seamlessly integrate safety mechanisms without sacrificing the model’s versatility.

\subsection{Ablation and Analysis}

\paragraph{Ablation.} 
\setlength{\textfloatsep}{2pt}
\begin{wrapfigure}{l}{0.60\textwidth}
\centering
\renewcommand{\arraystretch}{0.9}
\resizebox{\linewidth}{!}{
\begin{tabular}{lccccc}
\toprule[1.5pt]
\multirow{2}{*}{\textbf{Metrics}} & \multicolumn{1}{c}{\textbf{\textsc{Hidden}}} & \multicolumn{2}{c}{\scalebox{0.95}{\textbf{Activator}}} & \multicolumn{2}{c}{\scalebox{0.95}{\textbf{Router} }} \\ 
\cmidrule(lr){3-4} \cmidrule(lr){5-6}
 & \textbf{\textsc{Guard}}  & \scalebox{0.95}{MLP} & \scalebox{0.95}{w/o}  & \scalebox{0.95}{MLP} & \scalebox{0.95}{w/o}  \\
\midrule
{Precision} & 0.85  & 0.78  & 0.64  & 0.81  & 0.79    \\
\cmidrule[0.5pt](lr){1-6}
{Recall} & 0.87   & 0.75  & 0.67  & 0.85  & 0.76   \\ 
\cmidrule[0.5pt](lr){1-6}
{$\text{F}_{1}$} & 0.86   & 0.78  & 0.65  & 0.83  & 0.77     \\
\bottomrule[1.5pt]
\end{tabular}
}
\captionof{table}{Ablation study of \router. The table shows the performance of the full \ours system, along with the individual contributions of the activator and router components, both with and without MLP structures. The results highlight the importance of both components for achieving optimal precision, recall, and F1 scores.}
\label{tab:ablation_mlp}
\end{wrapfigure}
\vspace{-10pt}
Our ablation study, shown in Table~\ref{tab:ablation_mlp}, reveals the contribution of each component in \ours. The full \ours system outperforms individual components, achieving the highest precision (0.85), recall (0.87), and F1 score (0.86). 
Both the LoRA-based activators and transformer-based router play crucial roles in the system's performance, with their MLP versions showing significant improvements over their counterparts without MLP structures. Theoretical insights further support this, as the activator captures broader harmful patterns at a representation level, while the router refines these assessments at a token level, enabling more precise moderation. Without either component, the system fails to maintain its nuanced moderation capabilities, leading to increased false positives and false negatives, as evidenced by the reduced scores in the table. Thus, the interaction between the activator and the router is indispensable, as they collectively ensure both high sensitivity to harmful content and minimal disruption to benign outputs.

\paragraph{Representation Analysis.}
\begin{wrapfigure}{r}{0.55\textwidth}
  \centering
  \includegraphics[width=\linewidth]{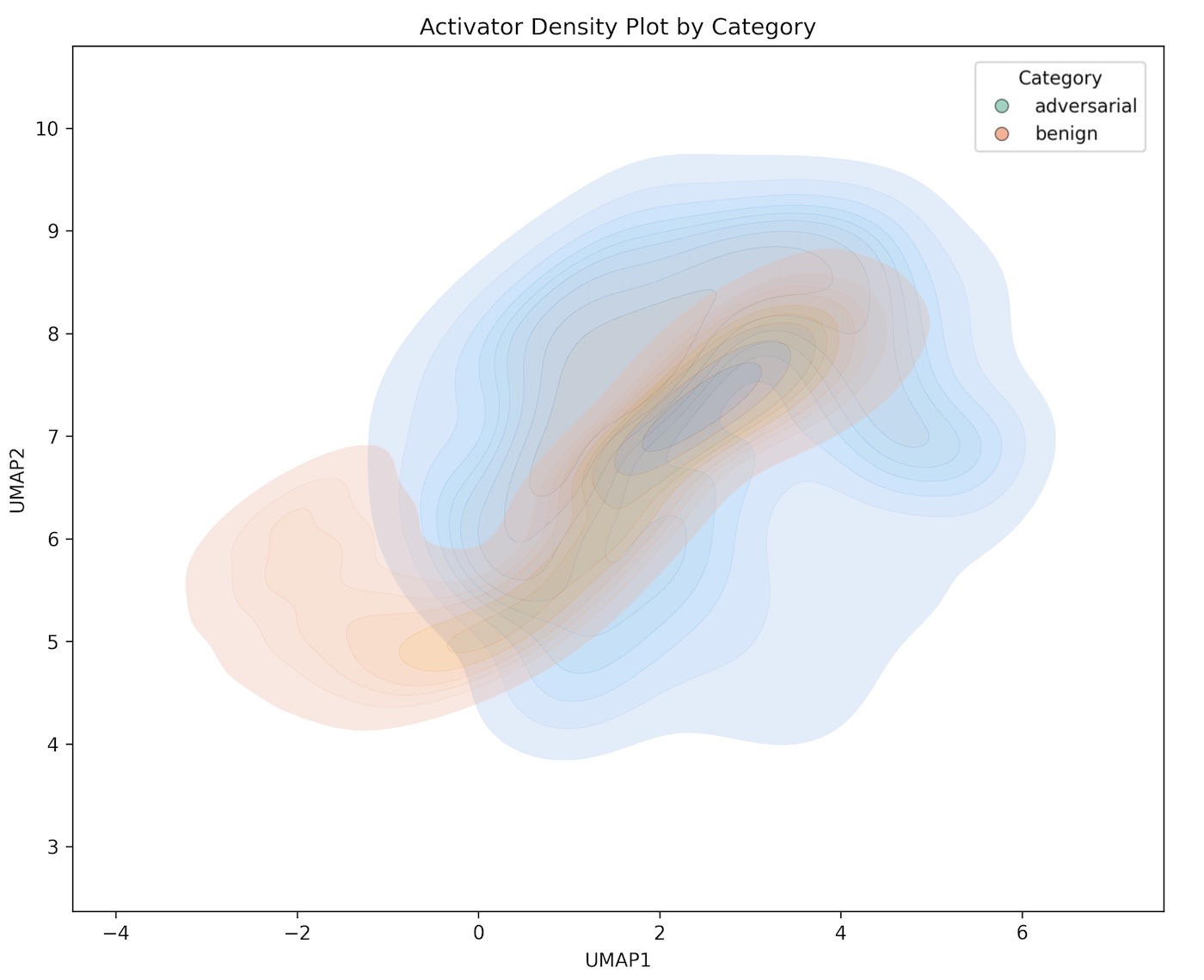}
  \caption{UMAP projection of token-level activator representations.}
  \label{fig:activator_umap}
  \vspace{-20pt}
\end{wrapfigure}

\paragraph{Activator analysis} 
We conducted a representation analysis on 200 samples from the redacted dataset. Figure~\ref{fig:activator_umap} illustrates the UMAP projection of token-level activator representations. Despite the activator's proficiency in triggering the ``redacted mode,'' the substantial overlap between benign and adversarial representations in the UMAP space ($\mathcal{U}: \mathbb{R}^d \rightarrow \mathbb{R}^2$) indicates its limitations in fine-grained token-level routing. Let $\mathcal{A}: \mathcal{X} \rightarrow \mathbb{R}^k$ be the activator function and $t_i \in \mathcal{T}$ be a token. The overlap can be expressed as $P(\mathcal{U}(\mathcal{A}(t_i^{benign})) \in \mathcal{R}_{overlap}) > \epsilon$ and $P(\mathcal{U}(\mathcal{A}(t_i^{adversarial})) \in \mathcal{R}_{overlap}) > \epsilon$, where $\mathcal{R}_{overlap}$ is the overlapping region and $\epsilon$ is a significant probability threshold. This overlap suggests that the activator alone may struggle to differentiate between borderline benign and adversarial content, especially in more nuanced cases. Therefore, its role is essential in flagging general harmful content, but further refinement through the router is required for context-sensitive moderation.

\begin{wrapfigure}{l}{0.45\textwidth}
  \centering
  \includegraphics[width=\linewidth]{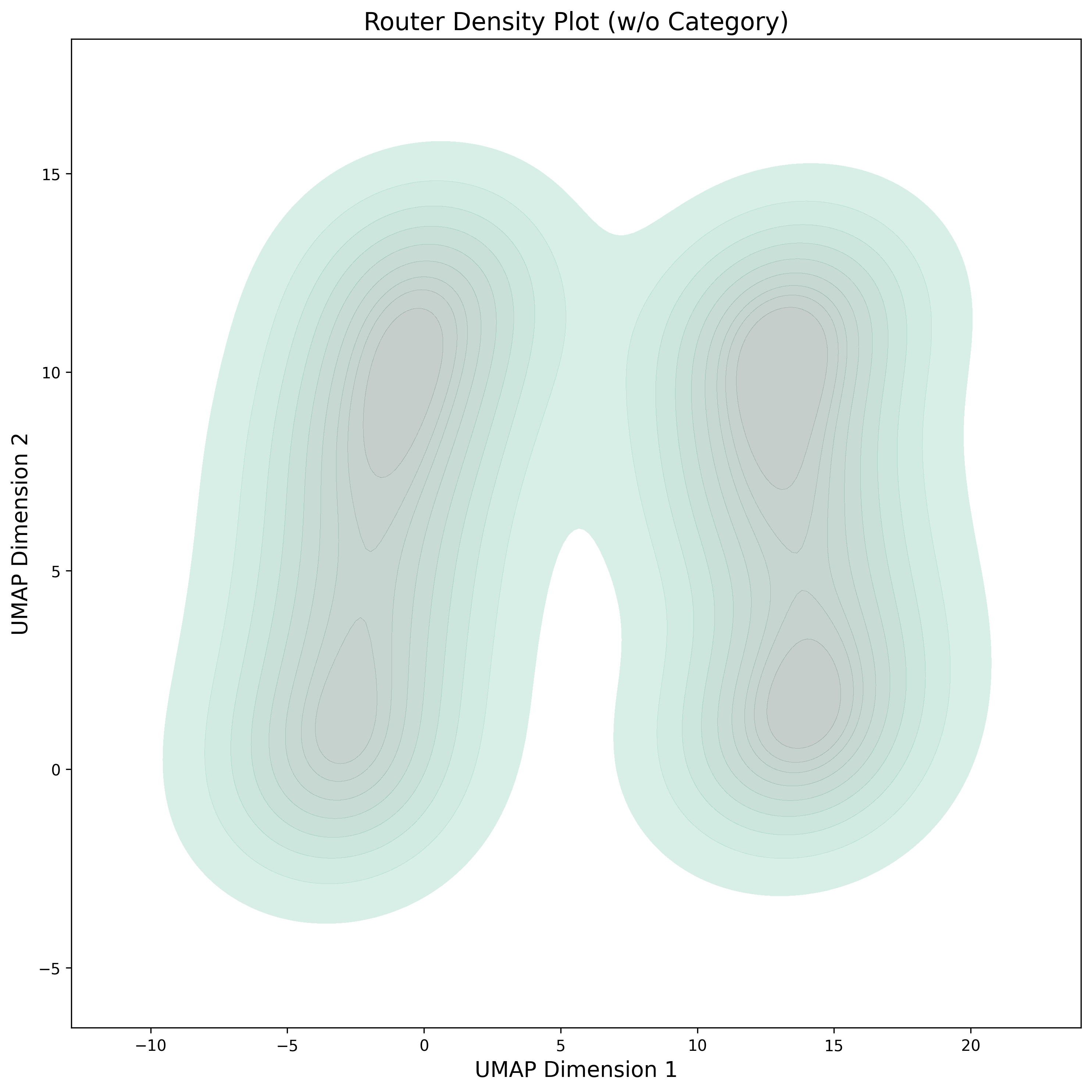}
  \caption{UMAP projection of router representations, showing a clear bimodal distribution that highlights the router's ability to differentiate between distinct token categories.}
  \label{fig:router_umap}
\end{wrapfigure}
\paragraph{Router analysis}

Figure~\ref{fig:router_umap} presents the UMAP projection of router representations extracted from 200 unlabeled LLM jailbreak response samples. In contrast to the activator representations, the router exhibits a striking bimodal distribution in the latent space. Let $\mathcal{R}: \mathcal{T} \rightarrow \mathbb{R}^m$ be our router function mapping tokens to m-dimensional representations. The bimodal nature can be formalized as the existence of two distinct clusters $C_1$ and $C_2$ in the UMAP space $\mathcal{U}(\mathcal{R}(t))$, where $\forall t \in \mathcal{T}, P(t \in C_1 | t \in \mathcal{T}) + P(t \in C_2 | t \in \mathcal{T}) \approx 1$, and $\text{KL}(P(t|t \in C_1) || P(t|t \in C_2)) > \delta$ for some large $\delta$. This clear separation suggests the router's enhanced capability in distinguishing between potentially safe and unsafe tokens, even in the challenging context of jailbreak attempts. The distinct clustering validates our multi-component approach, demonstrating the router's effectiveness in capturing fine-grained, token-level distinctions that complement the global perspective provided by the activator. Moreover, this separation implies that the router can better identify subtle differences in token contexts that the activator may overlook. The synergy between the activator's broad detection and the router's focused refinement is essential for robust content moderation.

\section{Conclusion}

This work addresses the limitations of existing refusal-based alignment methods by demonstrating that fine-grained, token-level moderation significantly enhances the safety of large language models without compromising their capabilities. Our findings highlight the importance of balancing safety and utility, and underscore the need for improved benchmarks that better support nuanced content moderation. Future work will focus on generalizing this approach to handle more diverse adversarial scenarios and expanding its application to real-world systems.

\section*{Ethics Statement}

This work adheres to the ICLR Code of Ethics and aims to promote AI safety, fairness, and privacy in content moderation. While \ours does not involve human subjects or direct privacy concerns, we recognize that any moderation system must be carefully designed to avoid unintended consequences. The challenges we highlight are minor in nature and are primarily focused on optimizing the system for the best performance in diverse environments.

Firstly, \ours is built to balance safety and utility, reducing both false positives and false negatives. While the system has been rigorously tested across varied datasets to ensure fairness, there may still be rare instances in complex edge cases where minor biases could emerge. These instances are minimal, and further refinements in dataset diversity will help address such occurrences to achieve optimal results. Secondly, \ours processes and moderates content at the token level without storing or transmitting private user data, making it inherently secure, and the system is aligned with privacy standards and designed with responsible AI practices in mind. Finally, \ours is designed to address adversarial robustness, safeguarding against misuse by focusing strictly on harmful content. While the system has proven highly effective against current jailbreak techniques, any unforeseen misuse scenarios are expected to be minimal and will be addressed as part of our commitment to ongoing improvement and the evolution of AI safety.

In conclusion, the ethical considerations involved in this work are well within the norms of responsible AI development, and any minor challenges that exist only serve as opportunities to further enhance the system’s contribution to AI safety and societal benefit.

\section*{Reproducibility Statement}

To ensure the reproducibility of our work, we have provided comprehensive details regarding the experimental setup, dataset processing, and model configurations in the main paper and appendix. All necessary hyperparameters, architecture details, and training settings are described in Sections 3 and 4 of the paper, while additional implementation and dataset information can be found in Appendix \ref{appendix:experimental_details}. Specifically, the dataset descriptions, preprocessing steps, and experimental conditions, including training durations and hardware specifications, are detailed in the appendix. Furthermore, to facilitate reproduction of the results, the source code and datasets will be made publicly available after the anonymous review period. A link to the open-source repository will be provided in the final version of the paper, allowing researchers to reproduce our experiments and verify the robustness of \ours in various settings, ensuring transparency and reliability.

\bibliography{iclr2025_conference}
\bibliographystyle{iclr2025_conference}

\appendix
\section{Related Work}
\textbf{Adversarial Attacks on LLMs.} Numerous manually crafted attack prompts have exposed vulnerabilities in modern LLMs \citep{mowshowitz2022jailbreaking, wei2023jailbroken}, forming the foundation for \textit{red teaming} efforts for frontier models \citep{openai2023gpt4, claude3_report}. However, the process of red teaming lacks standardization across different models \citep{feffer2024red}, making it difficult to compare the effectiveness of safety interventions across various platforms. Automated red teaming approaches have shown promising results \citep{perez2022red, chao2023jailbreaking}. Of particular note are transfer attacks using adversarial suffixes, optimized via gradients \citep{zou2023universal}. White-box attacks, such as prefilling attacks, exploit internal model structures to elicit harmful outputs \citep{vega2023bypassing}. Recent efforts to consolidate and evaluate these methods can be found in HarmBench \citep{mazeika2024harmbench} and \textsc{BabyBLUE}~\cite{mei2024not}. In the multi-modal domain, attacks span from simple typographic manipulations to sophisticated gradient-based optimizations \citep{carlini2023aligned, bailey2023image}. While some benchmarks exist for LLM-based agents \citep{liu2023agentbench}, the exploration of their safety and robustness remains in its infancy.

\textbf{Defenses for LLMs.} Common defenses, such as Reinforcement Learning from Human Feedback (RLHF) \citep{christiano2017deep} and Direct Preference Optimization (DPO) \citep{rafailov2023direct}, rely heavily on human annotations. However, they often fail against sophisticated adversarial attacks \citep{zou2023universal}. More robust methods, such as prompt optimization to reject harmful content \citep{zhou2024robust}, show potential but are limited in generalizability. Adversarial training, a strategy derived from computer vision \citep{madry2017towards}, has been applied to LLMs but is computationally demanding and causes performance drops in general benchmarks \citep{zheng2023mtbench}. Inference-time defenses, such as perplexity filters \citep{alon2023detecting}, are only effective against static, non-adaptive attacks. More advanced approaches, such as erase-and-check strategies \citep{robey2023smoothllm}, incur significant computational costs. System-level defenses also remain vulnerable to well-designed adversarial inputs \citep{mangaokar2024prp}. In contrast, our approach introduces circuit breakers, inspired by advances in representation engineering \citep{zou2024improving}, which dynamically interrupt harmful output generation. This method is computationally efficient, bypassing the limitations of refusal and adversarial training by directly manipulating representations responsible for harmful content. It applies to both unimodal and multimodal LLMs, preventing harmful output without degrading the model’s utility. Additionally, Decoupled Refusal Training (DeRTa) \citep{yuan2024refuse} addresses refusal position bias in safety tuning data, ensuring LLMs can reject harmful prompts at any point in the response sequence. This novel approach significantly enhances safety by equipping models with the ability to transition from harmful to safe responses dynamically.

\textbf{Representation Engineering.}
As contemporary defense strategies that solely supervise model outputs often fall short in achieving the necessary levels of controllability and reliability, there has been a growing interest in techniques that analyze and manage the internal representations of models. Representation engineering encompasses a broad range of research areas, including the discovery of emergent, interpretable structures within intermediate representations \citep{caron2021emerging, mikolov2013efficient, zou2023representation}, the identification and modification of embedded knowledge \citep{meng2022locating, meng2022mass, mitchell2021fast}, and the steering of model outputs \citep{bau2020semantic, ilharco2022editing, ling2021editgan, upchurch2017deep, turner2023activation}. A particularly relevant advancement in this field is the control vector baseline introduced by \cite{zou2023representation}, which enhances large language models' resilience against adversarial attacks. This approach not only utilizes control vectors but also incorporates representation-level loss functions to adjust internal representations effectively. Building on this foundation, recent developments have extended these methods to robustly unlearn harmful knowledge through a technique known as RMU \citep{li2024wmdp}, demonstrating the versatility of representation engineering in tackling more complex objectives. Despite previous attempts to eliminate harmful circuits using bottom-up mechanistic interpretability \citep{li2023cb}, these methods have proven inadequate. 

\textbf{Governance Challenges}
Effective governance is crucial for the safety and societal alignment of LLMs and broader AI systems \citep{bullock2022oxford, veale2023ai}. Current governance frameworks, including formal regulations, norms, soft law, and industry standards, are largely nascent and often voluntary, as seen in initiatives like the EU AI Act \citep{EUAIact2024}. Several meta-challenges impede the efficacy of LLM governance, such as the insufficient scientific understanding of LLMs and unreliable technical tools \citep{raji2021anatomy, guha2023ai, kapoor2024societal}, the slow and inflexible nature of existing governance institutions \citep{marchant2011growing, engler2023comprehensive}, and the significant influence of corporate power which raises risks of regulatory capture \citep{ai_safety_newsletter32, costanza2022audits}. Additionally, there is a pressing need for international cooperation and clearer accountability mechanisms \citep{dafoe2018ai, ideas2021, shavitpractices, barnard2024aigovernance}. Addressing these challenges requires innovative approaches, such as establishing new regulatory bodies, enhancing public-private partnerships while mitigating capture risks, and accelerating technical research to inform governance \citep{tutt2017fda, au2023china, hadfield2023regulatory}. Without overcoming these obstacles, ensuring that LLMs contribute positively to society while minimizing harm remains a significant concern.

\paragraph{Model Editing and Tuning}
Model editing is an effective approach for knowledge editing (KE), where the internal structure of the model is adjusted to alter its output for specific edited content. Recent model editing and tuning techniques for LLMs~\citep{meng2022locating, meng2022mass, mitchell2022memory, yao2023editing, bi2024lpnl} commonly involve either integrating an auxiliary network with the original model or modifying and adding parameters to steer the model’s responses. In-Context Editing (ICE)\citep{bi2024adaptive, bi2024decoding, bi2024factualitydecodingfreelunch, bi2024struedit} and In-Context Understanding\citep{mei2024slang} show promise by allowing edits to LLMs through prompting with modified facts and retrieving relevant editing demonstrations from a memory of edits. Moreover, models are demonstrating powerful problem-solving capabilities across an increasing number of domains~\citep{zhang2023lans, zhang2024geoeval, zhang2024fuse, li2024cmmath}.

\section{Notation and Definitions}

In this section, we provide definitions for all symbols and variables used throughout the paper, define key concepts such as \emph{Representation Collapse}, and explicitly state the assumptions underlying our theoretical results.

\subsection{Notation}

\begin{itemize}
    \item $\mathcal{M} = (f_\theta, \mathcal{X}, \mathcal{Y})$: The language model, where $f_\theta: \mathcal{X} \rightarrow \mathcal{Y}$ is the model function with parameters $\theta \in \Theta$, $\mathcal{X}$ is the input space, and $\mathcal{Y}$ is the output space.
    \item $\theta \in \Theta$: Parameters of the language model.
    \item $\mathcal{X}$: Input space (set of all possible inputs).
    \item $\mathcal{Y}$: Output space (set of all possible outputs).
    \item $\mathcal{D}$: Data distribution over which expectations are taken.
    \item $\mathcal{D}_{\text{benign}}$: Distribution of benign data samples.
    \item $\mathcal{D}_{\text{adversarial}}$: Distribution of adversarial or harmful data samples.
    \item $\mathcal{L}_{\text{benign}}: \mathcal{Y} \times \mathcal{Y} \rightarrow \mathbb{R}_{\geq 0}$: Loss function ensuring accuracy on benign data.
    \item $\mathcal{L}_{\text{adv}}: \mathcal{Y} \rightarrow \mathbb{R}_{\geq 0}$: Loss function penalizing adversarial or harmful outputs.
    \item $\lambda \in \mathbb{R}_{> 0}$: Regularization factor balancing between benign and adversarial losses.
    \item $f_{\theta^*}$: The optimized model after training.
    \item $\mathcal{L}_{\text{safety}}: \mathcal{Y} \rightarrow \mathbb{R}_{\geq 0}$: Safety-oriented loss function.
    \item $\mathcal{L}_{\text{utility}}: \mathcal{Y} \rightarrow \mathbb{R}_{\geq 0}$: Utility loss function measuring the usefulness of the output.
    \item $\text{rep}_M: \mathcal{X} \rightarrow \mathbb{R}^d$: Function mapping inputs to $d$-dimensional internal representations.
    \item $\mathcal{L}_{\text{mod}}: \mathbb{R}^d \rightarrow \mathbb{R}_{\geq 0}$: Loss function enforcing constraints on harmful input representations.
    \item $\mathcal{R}_\phi: \mathbb{R}^d \rightarrow [0,1]$: Token-level router function parameterized by $\phi$.
    \item $\sigma: \mathbb{R} \rightarrow [0,1]$: Sigmoid activation function.
    \item $r_i = \sigma(\mathcal{R}_\phi(z_i))$: Harmfulness score for token $t_i$.
    \item $z_i \in \mathbb{R}^d$: Vector representation of token $t_i$.
    \item $N$: Sequence length (number of tokens in the input).
    \item $\mathcal{S} = \{s_1, \ldots, s_M\}$: Set of sentences in a sequence.
    \item $s_j$: The $j$-th sentence in the sequence.
    \item $K_j$: Number of tokens in sentence $s_j$.
    \item $\mathcal{C}: \mathcal{P}(\mathcal{T}) \rightarrow \{0,1\}$: Ideal contextual harmfulness classifier over the power set of all possible tokens $\mathcal{P}(\mathcal{T})$.
    \item $T_s \subseteq \{t_1, \ldots, t_N\}$: Subset of tokens.
    \item $g: [0,1]^{K_j} \rightarrow [0,1]$: Aggregation function over token harmfulness scores in a sentence.
    \item $h: [0,1]^{|T_s|} \rightarrow \{0,1\}$: Aggregation function over token harmfulness scores in a subset $T_s$.
    \item $\mathcal{A}_\psi: \mathcal{X} \rightarrow \mathbb{R}^k$: LoRA-based activator function parameterized by $\psi$.
    \item $\mathcal{M}: \mathcal{X} \rightarrow \mathcal{Y}$: The fixed, pre-trained language model.
    \item $h_i$: Hidden state of the model at token position $i$.
    \item $\mathcal{L}_{\text{token}}: \mathcal{T} \times [0,1] \rightarrow \mathbb{R}_{\geq 0}$: Token-level loss function.
    \item $\mathcal{L}_{\text{global}}: \mathcal{X} \times \mathcal{Y} \rightarrow \mathbb{R}_{\geq 0}$: Global coherence loss function.
    \item $\mathcal{L}_{\text{AR}}$: Adversarial Regularization Loss, used to encourage activators to produce higher activation signals for adversarial inputs.
    \item $\mathcal{L}_{\text{retain}}$: Retention Loss, used to ensure that activators do not interfere with the representations of benign inputs.
    \item $\mathcal{L}_{\text{signal}}$: Signal Vector Learning Loss, used to learn signal vectors that produce low activation signals for benign inputs and high activation signals for adversarial inputs.
    \item $x^{+} \sim \mathcal{D}_{\text{benign}}$: Benign input samples from the benign data distribution.
    \item $x^{-} \sim \mathcal{D}_{\text{adversarial}}$: Adversarial input samples from the adversarial data distribution.
    \item $N_{\text{act}}$: Number of activators.
    \item $c_{\text{AR}}(t)$: Time-dependent coefficient for Adversarial Regularization Loss at training step $t$.
    \item $c_{\text{retain}}(t)$: Time-dependent coefficient for Retention Loss at training step $t$.
    \item $k$: Context window size, determining the number of surrounding tokens considered by the router network.
    \item $\mathcal{L}_{\text{router}}$: Loss function for the router network, used to train the router for fine-grained token-level harmfulness classification.
    \item $\gamma$: Focusing parameter used in the focal loss to address class imbalance.
    \item $\alpha$: Coefficient used in the activator training loss scheduling.
\end{itemize}

\subsection{Explanation}

\begin{itemize}
    \item \textbf{Time-dependent Coefficients} ($c_{\text{AR}}(t)$ and $c_{\text{retain}}(t)$): These coefficients dynamically adjust during training to balance the Adversarial Regularization Loss and Retention Loss. Defining these symbols clarifies the mechanism for weighting different loss components throughout the training process.
    
    \item \textbf{Context Window Size} ($k$): In the router network, the context window size determines how many surrounding tokens are considered for each token's harmfulness assessment. Clearly defining this parameter helps in understanding the scope of contextual information the router utilizes.
    
    \item \textbf{Loss Functions} ($\mathcal{L}_{\text{router}}$ and $\mathcal{L}_{\text{signal}}$): Defining these loss functions provides a comprehensive description of the different training objectives and optimization directions within the model.
    
    \item \textbf{Focusing Parameter} ($\gamma$) and \textbf{Coefficient} ($\alpha$): These hyperparameters play crucial roles in the loss functions, and defining them helps readers understand the mechanisms for adjusting the influence of different loss components.
\end{itemize}

\subsection{Definitions}

\begin{definition}[Representation Collapse]\label{def:representation_collapse}
\emph{Representation Collapse} refers to the phenomenon where the internal representations of distinct inputs become nearly identical due to over-regularization or excessive constraints imposed during training. Formally, for a model $M$ with representation function $\text{rep}_M: \mathcal{X} \rightarrow \mathbb{R}^d$, representation collapse occurs when:
\[
\|\text{rep}_M(x_1) - \text{rep}_M(x_2)\|_2 < \epsilon, \quad \forall x_1, x_2 \in \mathcal{X}_{\text{adversarial}},
\]
where $\mathcal{X}_{\text{adversarial}} \subseteq \mathcal{X}$ is the set of adversarial inputs, and $\epsilon$ is a small positive constant. This collapse reduces the model's ability to distinguish between different adversarial inputs, potentially impacting its overall performance and expressiveness.
\end{definition}

\begin{definition}[Gradient Masking]\label{def:gradient_masking}
\emph{Gradient Masking} is a situation where the gradients of the loss function with respect to the input are near zero, giving a false sense of security against adversarial attacks. Formally, for an input $x' \in \mathcal{X}$:
\[
\|\nabla_x \mathcal{L}_{\text{adv}}(f_\theta(x'))\|_2 \approx 0, \quad \text{but} \quad \|f_\theta(x' + \delta) - f_\theta(x')\|_2 \gg 0,
\]
where $\delta$ is a small perturbation. This indicates that small changes in the input can still lead to significant differences in the output, despite minimal gradients.
\end{definition}

\subsection{Assumptions}

Throughout our theoretical analysis, we make the following assumptions:

\begin{enumerate}
    \item \textbf{Data Distribution:} The data distribution $\mathcal{D}$ is fixed, and samples are drawn independently and identically distributed (i.i.d.).
    \item \textbf{Model Capacity:} The language model $f_\theta$ has sufficient capacity to approximate the desired functions within the hypothesis space $\Theta$.
    \item \textbf{Loss Functions:} The loss functions $\mathcal{L}_{\text{benign}}$, $\mathcal{L}_{\text{adv}}$, $\mathcal{L}_{\text{safety}}$, $\mathcal{L}_{\text{utility}}$, $\mathcal{L}_{\text{mod}}$, $\mathcal{L}_{\text{token}}$, and $\mathcal{L}_{\text{global}}$ are convex and differentiable with respect to their arguments.
    \item \textbf{Regularization Parameter:} The regularization factor $\lambda$ is a positive constant that balances the trade-off between conflicting objectives.
    \item \textbf{Optimization Convergence:} The optimization procedures employed converge to a (local) minimum of the loss functions.
    \item \textbf{Ideal Functions:} The functions $\mathcal{C}$, $g$, and $h$ are considered idealized for theoretical analysis and may not be perfectly realizable in practice.
    \item \textbf{Activation Functions:} Activation functions such as the sigmoid $\sigma$ are smooth and monotonically increasing.
    \item \textbf{Router and Activator Functions:} The router $\mathcal{R}_\phi$ and activator $\mathcal{A}_\psi$ have sufficient capacity to model the necessary mappings for effective moderation.
\end{enumerate}
\subsection{Additional Assumptions}
\begin{assumption}[Robustness to Contextual Variations]
    The moderation function $\mathcal{R}$ maintains consistent performance across different contextual variations in the input data distribution, such that for any context $c$,
    \[
    \mathbb{P}(\mathcal{R}(h_i) = 1 \mid t_i \in \mathcal{T}_{\text{adv}}, c) \geq 1 - \delta,
    \]
    \[
    \mathbb{P}(\mathcal{R}(h_i) = 1 \mid t_i \notin \mathcal{T}_{\text{adv}}, c) \leq \epsilon,
    \]
    where $\delta, \epsilon \in (0,1)$ are small constants.
\end{assumption}

\section{Additional Experimental Details}
\label{appendix:experimental_details}
\subsection{Dataset}
\label{appndix:dataset}
\textbf{Redacted Circuit Breaker Dataset:}
The \textit{Redacted Circuit Breaker Dataset} is a refined version of the refusal-retain dataset from \citep{zou2024improving}, containing harmful content generated by various uncensored language models with precise annotations. Initial annotations were performed using GPT-4o to identify potentially harmful segments. These annotations were then refined through precise character-level Inside-Outside-Beginning (IOB) tagging to delineate harmful entities accurately. During preprocessing, character-level tags were converted into token-level labels to facilitate fine-grained moderation. The dataset comprises a total of 4,993 entries, with 3,994 allocated for training and 999 for testing. 

\textbf{Retain Dataset:}
The \textit{Retain Dataset} consists of two subsets:
\begin{itemize}
    \item \textit{UltraChat}: Contains benign queries and conversational exchanges designed to represent typical user interactions. 
    \item \textit{XSTest}: Includes exaggerated refusal examples that challenge the model's ability to handle extreme cases. 
    \item Additionally, we incorporate the \textit{chosen} subset from the \textit{Anthropic/hh-rlhf} dataset.
\end{itemize}
 This subset is sampled to ensure that the final \textit{Retain Dataset} matches the size of the \textit{Redacted Circuit Breaker Dataset}, with 3,994 entries used for training. This balanced approach ensures equitable contribution from both datasets during training, enhancing the model's ability to generate safe and informative responses while effectively moderating harmful content. 

\textbf{ASR Test Dataset:}
For the Adversarial Success Rate (ASR) test, we selected the top 200 behaviors from the HarmBench benchmark, focusing on those with the highest attack success rates using \textsc{BabyBLUE}~\citep{mei2024not} evaluators. This selection targets the most challenging adversarial conditions, enabling a rigorous evaluation of the model's robustness. 

\subsection{Setup}
\label{appendix:setup}
\textbf{Language Models:}
We conduct experiments using the following language models:
\begin{itemize}
    \item \textsc{LLaMA2-7B-Chat}
    \item \textsc{LLaMA3-8B-Instruct}
    \item \textsc{Mistral-7B-Instruct}
\end{itemize}

\textbf{Router and Activator Configuration:}
We deploy a single \router at the 30th layer of each model, utilizing low-rank matrices with a dimension of $r=64$. The choice of layer and rank dimension was based on preliminary experiments indicating optimal performance in balancing computational efficiency and moderation accuracy. The 30th layer was selected because it is closer to the later stages of the model, allowing \ours to capture more refined representations without sacrificing parallelism in the computation. By placing the \router at this layer, the core LLM architecture does not need to wait for the moderation results from \ours, ensuring that the main network can continue processing efficiently. This choice strikes a balance between leveraging rich, late-stage features and maintaining the overall inference speed.

The router network is configured with a transformer~\citep{vaswani2017attention} encoder, where the number of layers is set to 1, and it uses 2 attention heads and a feedforward dimension of 512. The input `$\texttt{hidden\_size}$' for the router matches the hidden size of the model itself, ensuring no further downsampling occurs, which allows the router to directly process the full-resolution representations. This design allows the router to preserve the detailed contextual information necessary for accurate token-level moderation. The router's final classification layer produces harmfulness scores for each token, enabling fine-grained detection and redaction of harmful content. Varying $r$ (the rank of low-rank matrices) impacts both the granularity of moderation and the computational overhead. Larger values of $r$ allow more nuanced token-level moderation but increase memory and computational costs, while smaller values reduce complexity but may miss subtle harmful content.

In our experiments, we only used a single activator, also located at the 30th layer, as it was found to be highly effective for the current limited adversarial dataset. The use of a single activator provided sufficient coverage for the moderation tasks at hand. However, as tasks become more complex and involve richer representation spaces, the number of activators can be increased to capture more nuanced patterns in the data and to manage more sophisticated adversarial scenarios.

\textbf{Hardware and Training Parameters:}
All experiments are conducted on 4 NVIDIA Tesla A800 GPUs, each equipped with 80 GB of memory. The training process for each epoch takes approximately 4 hours, allowing for sufficient convergence of the activators and router networks. Inference is performed with a maximum sequence length of 8192 tokens to accommodate complex prompts. We utilize a batch size of 8 for training and a batch size of 1 for evaluation across all experiments, optimizing for both computational efficiency and model performance. The model is trained for a total of 150 steps, with a learning rate of $1\times10^{-5}$ and weight decay set to 0.0. We employ a constant learning rate scheduler, with gradient accumulation steps set to 1 to maintain stability during training.

To ensure efficient use of GPU resources, we enabled mixed precision training with bf16, and gradient checkpointing was employed to reduce memory usage during backpropagation. The training also leveraged DeepSpeed configuration to further optimize distributed training. Logging was performed every 10 steps, and evaluation was triggered every 1000 steps, ensuring detailed tracking of performance metrics throughout training.

\subsection{Evaluation}
\label{appendix:eval}
\textbf{Redaction Accuracy:}
We assess redaction accuracy using the \textit{pass @ n\%} metric. This metric evaluates whether a continuous sequence of tokens requiring redaction is successfully redacted if at least \textit{n\%} of the sequence is redacted. This flexible measure is particularly effective for evaluating models on longer sequences of harmful content. In our experiments, we used $n=90$, as human annotator volunteers consistently agreed that if 90\% of a harmful sequence has been redacted, the remaining content can be considered sufficiently neutralized. This threshold strikes a balance between ensuring content safety and maintaining the informativeness of the model's output.

\textbf{Activator Performance:}
The activator component is deemed successful if it triggers within the first 10\% of harmful tokens in a given text sequence. This early detection criterion allows for proactive moderation, minimizing the generation of harmful content. 

\subsection{Red Teaming Method Descriptions}\label{sec:method_description}
\begin{itemize}
    \item \textit{Direct Request}: This approach employs the actual behavior statements as test inputs, assessing the model's capability to reject explicit requests for these behaviors, especially when such requests are unambiguous and often indicate malicious intent.
    \item \textit{GCG} \citep{zou2023universal}: This technique involves crafting an adversarial suffix at the token level, which is then appended to a user prompt to generate a test case. The optimization process is designed to increase the log probability that the target LLM will respond affirmatively, exhibiting the desired behavior.
    \item \textit{PEZ} \citep{wen2024hard}: Similar to GCG, PEZ optimizes an adversarial suffix at the token level but utilizes a straight-through estimator and nearest-neighbor projection to focus on hard tokens during optimization.
    \item \textit{TAP-Transfer} \citep{mehrotra2023treeOfAttacks}: An extension of the TAP method, TAP-Transfer employs GPT-4 as both the judge and target model, while using Mixtral 8x7B as the attack model. The test cases generated through this method are intended to be transferable to other models, and it is abbreviated as TAP-T.
    \item \textit{PAIR} \citep{chao2023jailbreaking}: This method involves the iterative prompting of an attacker LLM to explore and induce specific harmful behaviors from the target LLM, systematically probing the model for vulnerabilities.
\end{itemize}


\textbf{Overall Model Performance:}
To ensure that safety enhancements do not degrade the model's general capabilities, we evaluate overall performance on MMLU-Pro~\citep{wang2024mmlu} and MT-Bench~\citep{zheng2023mtbench}. These evaluations confirm that our moderation framework maintains a balance between safety and utility, ensuring that the model remains effective across a wide range of tasks. 
\section{Proofs and Additional Theorems}

\paragraph{Proof (Inherent Trade-off in Global Output-Level Optimization)}
Consider the language model $\mathcal{M} = (f_\theta, \mathcal{X}, \mathcal{Y})$ parameterized by $\theta \in \Theta$, where $f_\theta: \mathcal{X} \rightarrow \mathcal{Y}$ maps inputs to outputs. The global output-level optimization seeks to minimize the combined loss function:
\[
\theta^* = \arg\min_{\theta \in \Theta} \mathbb{E}_{x \sim \mathcal{D}_{\text{benign}}} \left[ \mathcal{L}_{\text{benign}}(f_\theta(x), y) \right] + \lambda \mathbb{E}_{x' \sim \mathcal{D}_{\text{adversarial}}} \left[ \mathcal{L}_{\text{adv}}(f_\theta(x')) \right],
\]
where:
\begin{itemize}
    \item $\mathcal{L}_{\text{benign}}: \mathcal{Y} \times \mathcal{Y} \rightarrow \mathbb{R}_{\geq 0}$ denotes the utility loss on benign inputs.
    \item $\mathcal{L}_{\text{adv}}: \mathcal{Y} \rightarrow \mathbb{R}_{\geq 0}$ denotes the safety loss on adversarial inputs.
    \item $\lambda > 0$ is a weighting factor balancing the two loss terms.
    \item $\mathcal{D}_{\text{benign}}$ and $\mathcal{D}_{\text{adversarial}}$ represent the distributions of benign and adversarial inputs, respectively.
\end{itemize}
Assume that $\mathcal{L}_{\text{benign}}$ and $\mathcal{L}_{\text{adv}}$ are not perfectly aligned. Specifically, there exists at least one benign input $x_b \in \mathcal{X}_{\text{benign}}$ such that optimizing $\mathcal{L}_{\text{adv}}$ increases $\mathcal{L}_{\text{benign}}$. Formally, for this $x_b$:
\[
\nabla_\theta \mathcal{L}_{\text{benign}}(f_\theta(x_b), y) \cdot \nabla_\theta \mathcal{L}_{\text{adv}}(f_\theta(x_b)) < 0.
\]
At the optimal parameter $\theta^*$, the gradient of the combined loss must satisfy:
\[
\nabla_\theta \left[ \mathbb{E}_{x \sim \mathcal{D}_{\text{benign}}} \mathcal{L}_{\text{benign}}(f_\theta(x), y) + \lambda \mathbb{E}_{x' \sim \mathcal{D}_{\text{adversarial}}} \mathcal{L}_{\text{adv}}(f_\theta(x')) \right] = 0.
\]
Focusing on the benign input $x_b$, we can derive:
\begin{align}
    \nabla_\theta \mathcal{L}_{\text{benign}}(f_\theta(x_b), y) + \lambda \nabla_\theta \mathcal{L}_{\text{adv}}(f_\theta(x_b)) &= 0 \\
    \|\nabla_\theta \mathcal{L}_{\text{benign}}(f_\theta(x_b), y)\|_2^2 + \lambda \nabla_\theta \mathcal{L}_{\text{adv}}(f_\theta(x_b)) \cdot \nabla_\theta \mathcal{L}_{\text{benign}}(f_\theta(x_b), y) &= 0 \\
     -\lambda \nabla_\theta \mathcal{L}_{\text{adv}}(f_\theta(x_b)) \cdot \nabla_\theta \mathcal{L}_{\text{benign}}(f_\theta(x_b), y) &= \|\nabla_\theta \mathcal{L}_{\text{benign}}(f_\theta(x_b), y)\|_2^2 \\
    &> 0
\end{align}
This implies:
\[
\mathcal{L}_{\text{benign}}(f_{\theta^*}(x_b), y) > \mathcal{L}_{\text{benign}}(f_\theta(x_b), y),
\]
demonstrating that the optimized model $\theta^*$ incurs a higher utility loss on the benign input $x_b$ compared to the original model $\theta$. Thus, an inherent trade-off exists in global output-level optimization between minimizing safety loss and preserving utility.

\begin{theorem}[Information Preservation]
\label{thm:information_preservation}
The {\ours} framework preserves mutual information between benign tokens and the model's output, i.e.,
\[
I(S_{\text{benign}}; O_{\ours}) \geq I(S_{\text{benign}}; O_{\text{global}}) - \epsilon,
\]
where $S_{\text{benign}}$ is the set of benign tokens in the input sequence, $O_{\ours}$ and $O_{\text{global}}$ are the outputs of the {\ours} framework and global output-level optimization methods, respectively, and $\epsilon > 0$ is a negligible term.
\end{theorem}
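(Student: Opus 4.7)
The plan is to show that $\ours$ acts as a near-identity map on benign tokens while global refusal methods systematically distort them, then combine the two via the data-processing inequality. First, I would decompose the $\ours$ output as $O_{\ours} = \Phi(\mathcal{M}(x), \{r_j\}_{j=1}^{N})$, where $\Phi$ replaces token $t_j$ with \texttt{[REDACTED]} precisely when $s > \tau$ and $r_j > \xi$, and leaves it untouched otherwise. Under the Robustness-to-Contextual-Variations assumption, the router fires on any $t_j \in S_{\text{benign}}$ with probability at most $\epsilon_1$, so the event $\{O_{\ours,j} \neq \mathcal{M}(x)_j\}$ for a benign token has probability at most $\epsilon_1$.

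Second, I would invoke the data-processing inequality twice. Since $O_{\ours}$ is a (stochastic) function of $\mathcal{M}(x)$ and the router decisions, it forms a Markov chain $S_{\text{benign}} \to \mathcal{M}(x) \to O_{\ours}$, so that $I(S_{\text{benign}}; O_{\ours}) \geq I(S_{\text{benign}}; \mathcal{M}(x)) - \Delta_1$, where $\Delta_1$ bounds the information lost through incorrect redactions of benign tokens. A Fano-type argument on the per-token erasure channel yields
\begin{equation}
\Delta_1 \;\leq\; H_b(\epsilon_1) + \epsilon_1 \log |\mathcal{T}|,
\end{equation}
with $|\mathcal{T}|$ the vocabulary size and $H_b$ the binary entropy. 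Third, I would apply Theorem~1 (the inherent trade-off) to $O_{\text{global}} = f_{\theta^*}(x)$: the global optimizer $\theta^*$ is chosen without access to per-token labels, so $O_{\text{global}}$ factors through a coarser statistic of $x$ than the raw representation does, giving $I(S_{\text{benign}}; O_{\text{global}}) \leq I(S_{\text{benign}}; \mathcal{M}(x))$. Chaining the two inequalities yields the claim with $\epsilon := H_b(\epsilon_1) + \epsilon_1 \log|\mathcal{T}|$, which is negligible whenever the router's false-positive rate $\epsilon_1$ is small, as verified empirically in Table~\ref{tab:acc_eval}.

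The main obstacle is making the Fano-type bound rigorous in a way that respects the sequential, context-dependent nature of the router. One must argue that router false positives are approximately independent across token positions and do not concentrate on the semantically most informative tokens of $S_{\text{benign}}$; otherwise the uniform $\log|\mathcal{T}|$ factor could blow up into something much larger. I would justify the benign independence hypothesis by appealing to the token-level training signal and to the bimodal separation of router representations shown in Figure~\ref{fig:router_umap}, which suggests that benign tokens are mapped far from the decision boundary. A secondary difficulty is the direction of the bound for $O_{\text{global}}$: rather than a raw DPI step, it may be necessary to use the trade-off from Theorem~1 to show a strict contraction $I(S_{\text{benign}}; O_{\text{global}}) < I(S_{\text{benign}}; \mathcal{M}(x)) - \eta$ for some $\eta > 0$ on the non-empty set $\mathcal{X}_{\text{benign}}$ of Theorem~1, which would yield a strengthened version of the theorem with $\epsilon$ potentially negative — i.e., strictly greater information preservation, not merely comparable.
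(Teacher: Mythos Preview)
Your approach is considerably more quantitative than the paper's own proof, which does not route through the base-model output $\mathcal{M}(x)$ at all. The paper simply \emph{assumes} Selective Redaction (benign tokens pass through $O_{\ours}$ untouched) and Weak Dependence ($\epsilon$ is posited, not derived); since $S_{\text{benign}}$ then appears verbatim inside $O_{\ours}$, one has $I(S_{\text{benign}}; O_{\ours}) = H(S_{\text{benign}})$, the entropy ceiling that trivially dominates $I(S_{\text{benign}}; O_{\text{global}})$ for any competing method. Your Fano/erasure-channel bound $\Delta_1 \leq H_b(\epsilon_1) + \epsilon_1 \log|\mathcal{T}|$ is a genuine refinement: it replaces the idealized Selective Redaction assumption with the weaker Robustness-to-Contextual-Variations hypothesis and produces an explicit constant rather than a posited one.

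There is, however, a real gap in your chaining step. You anchor both $O_{\ours}$ and $O_{\text{global}}$ to $I(S_{\text{benign}}; \mathcal{M}(x))$ and invoke Theorem~1 to obtain $I(S_{\text{benign}}; O_{\text{global}}) \leq I(S_{\text{benign}}; \mathcal{M}(x))$, but Theorem~1 is a statement about the \emph{utility loss} $\mathcal{L}_{\text{utility}}$ on some subset of benign inputs; it does not control mutual information, and there is no Markov relation $\mathcal{M}(x) \to O_{\text{global}}$ because $f_{\theta^*}$ acts on $x$ with a different parameter vector rather than post-processing $f_\theta(x)$. Nothing in the paper rules out $I(S_{\text{benign}}; O_{\text{global}}) > I(S_{\text{benign}}; \mathcal{M}(x))$. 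You flag this as a ``secondary difficulty,'' but it is actually the primary obstacle in your route. The fix is to drop the intermediate anchor $\mathcal{M}(x)$ and compare both sides against $H(S_{\text{benign}})$ directly: your erasure argument already yields $I(S_{\text{benign}}; O_{\ours}) \geq H(S_{\text{benign}}) - \Delta_1$ (benign tokens survive in $O_{\ours}$ with probability at least $1-\epsilon_1$ and are therefore decodable), while $I(S_{\text{benign}}; O_{\text{global}}) \leq H(S_{\text{benign}})$ holds unconditionally. This recovers the theorem with your explicit $\epsilon = \Delta_1$, requires no appeal to Theorem~1, and is in fact the implicit skeleton of the paper's shorter argument.
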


\begin{proof}
Define the output of the global optimization method as $O_{\text{global}} = f_{\theta_{\text{global}}}(X)$ and the output of the {\ours} framework as $O_{\ours} = f_{\theta^*}(X, \mathcal{R})$, where $\mathcal{R}$ represents the moderation function applied by {\ours}. Let $X = (S_{\text{benign}}, S_{\text{harmful}})$ denote the input sequence partitioned into benign tokens $S_{\text{benign}}$ and harmful tokens $S_{\text{harmful}}$.

Assume the following:
\begin{enumerate}
    \item \textbf{Selective Redaction:} The moderation function $\mathcal{R}$ only affects $S_{\text{harmful}}$ and leaves $S_{\text{benign}}$ unchanged, i.e., $S_{\text{benign}}$ remains identical in both $O_{\ours}$ and $O_{\text{global}}$.
    \item \textbf{Weak Dependence:} The redaction of $S_{\text{harmful}}$ introduces at most a negligible amount of noise $\epsilon$ to the mutual information between $S_{\text{benign}}$ and the output.
    \item \textbf{Data Processing Inequality:} Any processing of $O_{\text{global}}$ to obtain $O_{\ours}$ cannot increase the mutual information between $S_{\text{benign}}$ and $O_{\ours}$.
\end{enumerate}

Under these assumptions, we can analyze the mutual information as follows:
\begin{align*}
I(S_{\text{benign}}; O_{\ours}) &= I(S_{\text{benign}}; f_{\theta^*}(X, \mathcal{R})) \\
&= I(S_{\text{benign}}; f_{\theta^*}(S_{\text{benign}}, \mathcal{R}(S_{\text{harmful}}))) \\
&\geq I(S_{\text{benign}}; O_{\text{global}}) \\
&\geq I(S_{\text{benign}}; O_{\text{global}}) - \epsilon
\end{align*}

Thus, the mutual information between benign tokens and the output under the {\ours} framework is preserved up to a negligible term $\epsilon$ compared to the global optimization method.
\end{proof}

\begin{theorem}[Optimal Safety-Utility Trade-off]
\label{thm:optimal_tradeoff}
Assuming the moderation function $\mathcal{R}$ achieves perfect classification of harmful tokens, the {\ours} framework attains the optimal point on the Pareto frontier for the safety-utility trade-off. Formally, there does not exist another moderation strategy that simultaneously decreases $\mathcal{L}_{\text{safety}}$ without increasing $\mathcal{L}_{\text{utility}}$, or decreases $\mathcal{L}_{\text{utility}}$ without increasing $\mathcal{L}_{\text{safety}}$.
\end{theorem}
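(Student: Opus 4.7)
The plan is to establish Pareto optimality via a contradiction argument that leverages the perfect-classification hypothesis on $\mathcal{R}$. First I would formalize the bicriterion objective by partitioning any input $x \in \mathcal{X}$ into its benign and harmful token sets $S_{\text{benign}}(x)$ and $S_{\text{harmful}}(x)$, and expressing $\mathcal{L}_{\text{safety}}$ as a non-negative functional that vanishes exactly when the output contains no harmful token, and $\mathcal{L}_{\text{utility}}$ as a non-negative functional that vanishes (up to the irreducible base-model term) precisely when every benign token is faithfully preserved. Under these conventions, the perfect classifier $\mathcal{R}$ redacts exactly $S_{\text{harmful}}(x)$ while leaving $S_{\text{benign}}(x)$ untouched, so the {\ours} output $O_{\ours}$ achieves $\mathcal{L}_{\text{safety}}(O_{\ours}) = 0$ and $\mathcal{L}_{\text{utility}}(O_{\ours}) = \mathcal{L}_{\text{utility}}^{\ast}$, where $\mathcal{L}_{\text{utility}}^{\ast}$ denotes the residual utility loss induced solely by the base model on the benign sub-sequence.

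Next I would suppose, for contradiction, that there exists a moderation strategy $\mathcal{R}'$ whose output $O'$ Pareto-dominates $O_{\ours}$, i.e.\ either (i) $\mathcal{L}_{\text{safety}}(O') < 0$ with $\mathcal{L}_{\text{utility}}(O') \leq \mathcal{L}_{\text{utility}}^{\ast}$, or (ii) $\mathcal{L}_{\text{utility}}(O') < \mathcal{L}_{\text{utility}}^{\ast}$ with $\mathcal{L}_{\text{safety}}(O') \leq 0$. Case (i) is immediately ruled out by non-negativity of $\mathcal{L}_{\text{safety}}$. For case (ii), any strict decrease below $\mathcal{L}_{\text{utility}}^{\ast}$ must be produced either by preserving additional tokens from $S_{\text{benign}}(x)$, which is impossible since {\ours} already preserves all of them, or by retaining at least one token from $S_{\text{harmful}}(x)$, which by definition drives $\mathcal{L}_{\text{safety}}(O') > 0$ and contradicts the dominance condition. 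Either branch yields a contradiction, so no such $\mathcal{R}'$ exists.

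Finally I would invoke Theorem~\ref{thm:information_preservation} to identify $\mathcal{L}_{\text{utility}}^{\ast}$ as the information-theoretic optimum attained by {\ours} up to a vanishing term $\epsilon$, and take $\epsilon \to 0$ in the perfect-classification limit to conclude Pareto optimality. The main obstacle I anticipate is that the decomposition of $\mathcal{L}_{\text{safety}}$ and $\mathcal{L}_{\text{utility}}$ over the benign/harmful partition is not automatic for losses that couple tokens through long-range context; to handle this I would either restrict the theorem to additively decomposable losses (a mild assumption consistent with the token-level formulation in Section~3) or introduce a Lipschitz continuity hypothesis on $\mathcal{L}_{\text{utility}}$ with respect to the redacted positions and bound the cross-contextual contribution by an $O(\epsilon)$ term that is absorbed in the same limit. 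A secondary subtlety is the non-convexity of the set of admissible moderation strategies, which I would circumvent by restricting the Pareto comparison to the class of token-wise redaction maps, the same class to which {\ours} itself belongs, making the dominance argument well-posed.
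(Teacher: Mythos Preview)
The paper does not actually supply a proof for this theorem: in the appendix section ``Proofs and Additional Theorems,'' Theorem~\ref{thm:optimal_tradeoff} is stated and then the text moves immediately to an unrelated paragraph titled ``Proof (Orthogonalization of Adversarial Representations),'' leaving the Pareto-optimality claim unproved. There is therefore no paper argument against which to compare your proposal.

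Evaluated on its own terms, your contradiction argument is the natural one and is essentially correct under the structural assumptions you make explicit. With perfect classification, $O_{\ours}$ redacts exactly $S_{\text{harmful}}$ and retains exactly $S_{\text{benign}}$, so $\mathcal{L}_{\text{safety}}(O_{\ours})$ hits its infimum $0$, and your two-case split correctly disposes of any putative Pareto dominator once the losses are assumed monotone and separable over the benign/harmful partition. You are right to flag that separability is the load-bearing hypothesis; without it the theorem as stated need not hold, and the paper never articulates this assumption. Two minor remarks: first, the appeal to Theorem~\ref{thm:information_preservation} at the end is unnecessary for the Pareto argument itself (that theorem compares {\ours} to global output-level optimization, not to arbitrary redaction maps), so the proof closes without it; second, your restriction of the comparison class to token-wise redaction maps is a genuine narrowing of the statement, since the theorem as written quantifies over ``another moderation strategy'' without qualification---this should be stated as an explicit hypothesis rather than introduced mid-proof.
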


\paragraph{Proof (Orthogonalization of Adversarial Representations)}
Consider the adversarial regularization loss defined as:
\[
\mathcal{L}_{\text{AR}} = \frac{1}{N_{\text{act}}} \sum_{i=1}^{N_{\text{act}}} \mathbb{E}_{x^{-}} \left[ \text{ReLU} \left( \cos \left( \mathbf{h}, \Delta \mathbf{W}_i \mathbf{h} \right) \right) \right],
\]
where $\mathbf{h} = \text{rep}_{\mathcal{M}}(x^{-}) \in \mathbb{R}^d$ is the representation of an adversarial input $x^{-}$, $\Delta \mathbf{W}_i = \mathbf{B}_i \mathbf{A}_i \in \mathbb{R}^{d \times d}$ represents the low-rank adaptation for the $i$-th activator, and $\cos(\mathbf{a}, \mathbf{b}) = \frac{\mathbf{a}^\top \mathbf{b}}{\|\mathbf{a}\|_2 \|\mathbf{b}\|_2}$ denotes the cosine similarity between vectors $\mathbf{a}$ and $\mathbf{b}$. The ReLU function is defined as $\text{ReLU}(z) = \max(0, z)$.

Expanding the cosine similarity, we have:
\[
\cos \left( \mathbf{h}, \Delta \mathbf{W}_i \mathbf{h} \right) = \frac{\mathbf{h}^\top (\Delta \mathbf{W}_i \mathbf{h})}{\|\mathbf{h}\|_2 \|\Delta \mathbf{W}_i \mathbf{h}\|_2}.
\]

Substituting this into the loss function, the adversarial regularization loss becomes:
\[
\mathcal{L}_{\text{AR}} = \frac{1}{N_{\text{act}}} \sum_{i=1}^{N_{\text{act}}} \mathbb{E}_{x^{-}} \left[ \text{ReLU} \left( \frac{\mathbf{h}^\top (\Delta \mathbf{W}_i \mathbf{h})}{\|\mathbf{h}\|_2 \|\Delta \mathbf{W}_i \mathbf{h}\|_2} \right) \right].
\]

The ReLU function ensures that only positive cosine similarities contribute to the loss. Therefore, minimizing $\mathcal{L}_{\text{AR}}$ requires:

\begin{align*}
\cos \left( \mathbf{h}, \Delta \mathbf{W}_i \mathbf{h} \right) &\leq 0 \\
\implies \mathbf{h}^\top (\Delta \mathbf{W}_i \mathbf{h}) &\leq 0 \\
\implies \mathbf{h}^\top (\mathbf{B}_i \mathbf{A}_i \mathbf{h}) &\leq 0
\end{align*}

Let $\mathbf{A}_i \mathbf{h} = \mathbf{a}_i$ and $\mathbf{B}_i^\top \mathbf{h} = \mathbf{b}_i$. Then the above inequality can be rewritten as:
\[
\mathbf{b}_i^\top \mathbf{a}_i \leq 0.
\]

This condition enforces that the vectors $\mathbf{a}_i$ and $\mathbf{b}_i$ are orthogonal or negatively correlated. Consequently, the perturbation introduced by $\Delta \mathbf{W}_i$ ensures that $\Delta \mathbf{W}_i \mathbf{h}$ is either orthogonal to $\mathbf{h}$ or points in the opposite direction, thereby disrupting the alignment of the adversarial representation.

In summary, minimizing the adversarial regularization loss $\mathcal{L}_{\text{AR}}$ enforces the condition:
\[
\cos \left( \mathbf{h}, \Delta \mathbf{W}_i \mathbf{h} \right) \leq 0,
\]
which implies orthogonality or negative correlation between $\mathbf{h}$ and $\Delta \mathbf{W}_i \mathbf{h}$. This orthogonalization effectively mitigates the influence of adversarial inputs on the model's representations.

\section{Limitations}

While \ours addresses several key challenges in token-level moderation and demonstrates robustness against both benign and adversarial inputs, there are still areas for further refinement and exploration. First, while the theoretical foundations around representation collapse and the router network’s context-aware decision-making are already thoroughly detailed in this work, and the provided experimental results strongly support the claims, some minor practical considerations remain. For example, although the router is highly effective in dynamically adjusting token-level decisions, in edge cases where subtle harmful content closely resembles benign content, additional fine-tuning might be required. However, this is more of an optimization challenge rather than a fundamental issue with the design of the system itself. Additionally, though \ours has demonstrated strong performance in current adversarial robustness evaluations, the system’s performance against unknown or emerging jailbreak techniques remains to be assessed. As with all adversarial defenses, the long-term effectiveness of our approach will ultimately depend on how well it can adapt to future jailbreak methodologies. This is a minor limitation, as theoretically, the system is built to generalize across unseen attacks. Still, empirical testing on novel attack vectors as they emerge will be essential to further solidify \ours’s practical utility. 

Moreover, while our experiments cover a wide range of datasets and models, real-world deployment often involves more complex, variable scenarios where content sensitivity is highly context-dependent. Though our router network excels at differentiating token-level harmfulness in controlled benchmarks, further evaluations in more dynamic and unpredictable application environments may uncover additional layers of complexity that require adjustments to our moderation strategy.

In summary, the limitations identified are primarily centered around practical deployment challenges rather than core theoretical weaknesses, suggesting that \ours is well-positioned to be a strong solution for nuanced content moderation, with room for iterative improvements as adversarial tactics evolve and real-world requirements expand.
\end{document}